\newtheorem{theorem}{Theorem}
\numberwithin{theorem}{section}
\newtheorem{proposition}[theorem]{Proposition}
\newtheorem{lemma}[theorem]{Lemma}
\newtheorem{corollary}[theorem]{Corollary}
\newtheorem{computation}[theorem]{Computation}
\theoremstyle{definition}
\newtheorem{definition}[theorem]{Definition}
\newtheorem{example}[theorem]{Example}
\newtheorem{problem}[theorem]{Problem}
\theoremstyle{remark}
\newtheorem{observation}[theorem]{Observation}
\newtheorem{remark}[theorem]{Remark}
\newcommand{\R}{\mathbb{R}}
\newcommand{\PP}{\mathbb{P}}
\newcommand{\RBM}{\operatorname{RBM}}
\newcommand{\vol}{\operatorname{vol}}
\newcommand{\conv}{\operatorname{conv}}
\newcommand{\sgn}{\operatorname{sgn}}
\newcommand{\X}{\mathcal{X}}
\newcommand{\Pcal}{\mathcal{P}}
\newcommand{\Ccal}{\mathcal{C}}
\newcommand{\Acal}{\mathcal{A}}
\newcommand{\Mcal}{\mathcal{M}}
\newcommand{\Zcal}{\mathcal{Z}}
\newcommand{\Gcal}{\mathcal{G}}
\newcommand{\Hcal}{\mathcal{H}}
\newcommand{\Ycal}{\mathcal{Y}}
\newcommand{\Y}{\mathcal{Y}}
\newcommand{\Ca}{\mathfrak{C}_{\text{\normalfont aff}}}
\newcommand{\Cn}{\mathfrak{C}}
\newcommand{\Xcal}{\mathcal{X}}
\newcommand{\up}{\operatorname{up}}
\newcommand{\down}{\operatorname{down}}
\begin{document}

\title{When Does a Mixture of Products \\Contain a Product of Mixtures?} 

\author[1,2]{Guido F. Mont\'ufar\thanks{\href{mailto:montufar@mis.mpg.de}{montufar@mis.mpg.de}}}
\author[2]{Jason Morton\thanks{\href{mailto:morton@math.psu.edu}{morton@math.psu.edu}}}

\affil[1]{Max Planck Institute for Mathematics in the Sciences\\ Inselstrasse 22, 04103 Leipzig, Germany.}
\affil[2]{Department of Mathematics, Pennsylvania State University\\ University Park, PA 16802, USA.}

\maketitle
\thispagestyle{empty}

\begin{abstract}
We derive relations between theoretical properties of restricted Boltzmann machines (RBMs), popular machine learning models which form the building blocks of deep learning models, and several natural notions from discrete mathematics and convex geometry.  We give implications and equivalences relating RBM-representable probability distributions, perfectly  reconstructible inputs, Hamming modes, zonotopes and zonosets, point configurations in hyperplane arrangements, linear threshold codes, and multi-covering numbers of hypercubes. As a motivating application, we prove results on the relative representational power of mixtures of product distributions and products of mixtures of pairs of product distributions (RBMs) that formally justify widely held intuitions about distributed representations. In particular, we show that a mixture of products requiring an exponentially larger number of parameters is needed to represent the probability distributions which can be obtained as products of mixtures. 

\smallskip
\noindent
{\em Keywords:} linear threshold function, Hadamard product, zonotope, tensor rank, hyperplane arrangement 
\newline
\noindent
{\em 2000 MSC:} 51M20, 60C05, 68Q32, 14Q15
\end{abstract}

%\newpage
%\setcounter{tocdepth}{2}
%\tableofcontents

\section{Introduction}\label{secintro}

Two basic ways of combining probability distributions are mixtures, i.e., convex combinations, and Hadamard products, i.e., renormalized entry-wise products. 
Fixing the number of parameters, we may ask: are Hadamard products of small mixtures better than mixtures at approximating interesting or complex probability distributions? 
The general intuition among practitioners is that using Hadamard products allows for more modeling power. 
We compare two canonical representatives of these model classes: mixtures of product distributions, called na\"ive Bayes models, and Hadamard products of mixtures of pairs of product distributions, called restricted Boltzmann machines (RBMs). 
The mixture of products model $\Mcal_{n,k}$ is the union of the convex hulls of all choices of $k$ joint distributions for $n$ independent binary variables (Definition~\ref{def:mixtureofproducts}). 
The restricted Boltzmann machine model $\RBM_{n,m}$ is the union of the Hadamard products of all choices of $m$ mixtures of pairs of joint distributions for $n$ independent binary variables (Definition~\ref{def:RBM}). 
Both are graphical probability models with hidden variables and bipartite graphs, see Figure~\ref{RBMandMixt}. 
Besides defining probability distributions on their visible states, these graphical models define conditional distributions between visible and hidden states, which makes them interesting in the context of learning representations.  
\begin{figure}%[t!]
\begin{center}
\setlength{\unitlength}{1cm}
\begin{picture}(10.5,3.7)(.75,.0)
\put(6,0){\includegraphics[clip=true,trim=5.3cm 22.04cm 11.5cm 1cm,width=5.5cm]{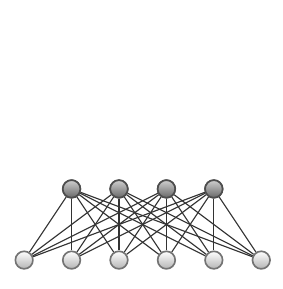}}
\put(0,0){\includegraphics[clip=true,trim=5.3cm 19.6cm 11.5cm 6cm,width=5.5cm]{RBMandMixt2BW.pdf}} 
\put(0,2.4){\begin{minipage}{5.5cm}\centering $k$\end{minipage}}
\put(0,3){\begin{minipage}{5.5cm}\centering Mixture of  Products \end{minipage}}
\put(6,3){\begin{minipage}{5.5cm}\centering Product of  Mixtures of Products (RBM)\end{minipage}}
\put(0,-.2){\begin{minipage}{5.5cm}\centering $\Mcal_{6,k}$\end{minipage}}
\put(6,-.2){\begin{minipage}{5.5cm}\centering $\RBM_{6,4}$\end{minipage}}
\end{picture}
\end{center}
\caption{Graphical representation of a mixture of products and a product of mixtures. 
The dark nodes represent hidden units and the light nodes represent visible units. 
}\label{RBMandMixt}
\end{figure}
This paper is the result of analyzing following problem. 
\begin{problem}\label{problem:When}
When does the mixture of product distributions $\Mcal_{n,k}$ contain the product of mixtures of product distributions $\RBM_{n,m}$, and vice versa? 
\end{problem}

In answer to the title question, our results in Sections~\ref{section:perspectives} and \ref{section:when} imply the following.
\begin{theorem}\label{thmsummaryans}
The number of parameters of the smallest mixture of products $\Mcal_{n,k}$ containing the product of mixtures $\RBM_{n,m}$ grows exponentially in the number of parameters of the latter for any fixed ratio $0 < m/n < \infty$. 
More precisely, the smallest $k$ such that  $\Mcal_{n,k}$ contains $\RBM_{n,m}$ is bounded by $\frac{3}{4}n \leq \log_2(k) \leq n-1$ when $m\geq n$, by $\frac{3}{4}n \leq \log_2(k) \leq m$ when $\frac{3}{4}n\leq m\leq n$, and satisfies $\log_2(k)=m$ when $m\leq \frac{3}{4}n$. 
\end{theorem}
The solution is of order $\log_2(k)= \Theta(\min\{m,n\})$. 
Hence the smallest mixture of products model that contains a given RBM model is as large as one can possibly expect, having near to one mixture component per joint state of the RBM hidden units, or otherwise containing every possible probability distribution. 
See Figure~\ref{solnm} for an illustration of the result. 
Theorem~\ref{thmsummaryans} is based on the more technical Theorem~\ref{notinclpropodos}. 
In Theorem~\ref{mixtnotinrbm} we show a complementary result stating that, although RBMs naturally contain small mixture models, in general they do not contain mixture models that match their dimension. 

To approach Problem~\ref{problem:When}, we study the sets of modes (Hamming-local maxima) of probability distributions that can be represented as mixtures of product distributions and as RBMs. 
We consider the following problems, showing in many cases that they are equivalent or equivalent after adding some necessary conditions.

\begin{problem}\label{problem1}
What sets of length-$n$ binary vectors are 
\begin{enumerate}
\item the modes or strong modes (Hamming-local maxima) of probability distributions represented by an RBM with $m$ hidden units?
\item perfectly reconstructible (given a vector in the set, choosing the most likely hidden state, then the most likely visible state, returns the given vector) by an RBM with $m$ hidden units? 
\item the outputs of $n$ linear threshold functions with $m$ inputs? 
\end{enumerate}
\end{problem}

We find that probability distributions with many strong modes (for example, probability distributions strictly supported on the binary vectors with even or odd number of ones), can be represented far more compactly by RBMs than by mixtures of products. 
Modes are described by linear inequalities of the form $p(x)>p(x')$ and can be used to derive polyhedral approximations of probability models. 
As it turns out, the analysis of modes is closely related to binary classification problems (separation of vertex sets of hypercubes by hyperplane arrangements), and leads to problems such as the following.

\begin{problem}
What is the smallest arrangement of hyperplanes, if one exists, that slices each edge of a hypercube a given number of times? 
\end{problem}

\begin{figure}%[h!]
\begin{center}
\setlength{\unitlength}{1.1cm}
\begin{picture}(11.5,4.55)(0,.7)
\put(0,0){\includegraphics[clip=true, trim=2cm 20.2cm 5cm 1.8cm, scale=1.1 ]{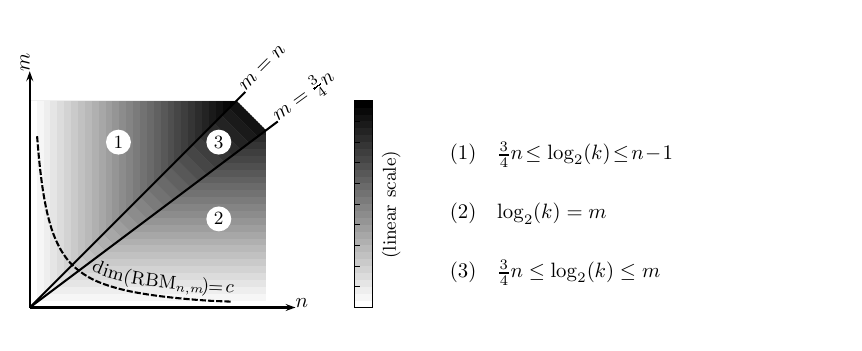}} 
% pdf2ps -sDEVICE=psgray figure.pdf 
\end{picture}
\end{center}
\caption{Smallest mixtures of products that can represent an RBM. 
Shown is the heat map of the logarithm of $k(n,m) =\min\{ k'\in\mathbb{N} \colon \Mcal_{n,k'}\supseteq \RBM_{n,m}\}$, depending on $n,m\in\mathbb{N}$. The domain of this  function has three regions, each with approximately linear behavior (Theorems~\ref{thmsummaryans} and~\ref{notinclpropodos}). 
The model $\RBM_{n,m}$ has $nm+n+m$ parameters. 
%An RBM of dimension $c$ has hyperparameters $n$ and $m$ satisfying $nm+n+m=c$ (the dashed hyperbola). 
Fixing $nm+n+m=c$ (the dashed hyperbola), the RBMs which are hardest to represent as mixtures of product distributions are those with $m/n \approx 1$.
}\label{solnm}
\end{figure}

We consider the following six properties of sets of binary vectors, and derive relations between them, summarized below in Theorem~\ref{thm:equivalences}. 

\begin{definition} %[Summary of Properties]
\label{def:summaryofproperties}
\mbox{}
Let $n$ and $m$ be two non-negative integers and let $\Ccal$ be a subset of $\{0,1\}^n$.
\begin{itemize}
\item {\bf LTC($n,m,\Ccal$)}: The set $\Ccal$ is an $(n,m)$-linear threshold code, i.e., the image of $n$ linear threshold functions with $m$ inputs (Definition \ref{defLTC}).
\item {\bf HP($n,m,\Ccal$)}: There exists an arrangement $\Acal$ of $n$ hyperplanes in $\R^m$ such that the vertices of the $m$-dimensional unit cube intersect exactly the $\Ccal$-cells of $\Acal$ (Definition~\ref{def:hyperplanearrangement}).
\item {\bf ZP($n,m,\Ccal$)}: There is an $m$-zonoset (i.e., the affine image of the vertices of an $m$-cube) in $\R^n$ which intersects exactly the $\Ccal$-orthants of $\R^n$ (Definition~\ref{def:zonoset}). 
\item {\bf SM($n,m,\Ccal$)}: An RBM  with $n$ visible and $m$ hidden nodes can represent a distribution with set of strong modes $\Ccal$ (Definition~\ref{strongmodedef}). 
\item {\bf PR($n,m,\Ccal$)}: The set $\Ccal$ is the set of perfectly reconstructible vectors of an RBM with $n$ visible and $m$ hidden units (Definition \ref{def:perfectlyreconstructible}).
\item {\bf SP($n,m,\Ccal$)}: An RBM with $n$ visible and $m$ hidden units can represent a distribution which is strictly positive on $\Ccal$ and zero elsewhere.
\end{itemize}
\end{definition}

We derive implications among the properties {\bf LTC}, {\bf PR}, {\bf HP}, {\bf ZP}, {\bf SM}, and {\bf SP} in two cases: 
the set $\Ccal$ is arbitrary, and $\Ccal$ consists of vectors which are at least Hamming distance~$2$ apart. 

\begin{figure}
\begin{center}
\begin{tikzpicture}[scale=1.5]
\node (SP) at (1,3) {\bf SP};
\node (SM) at (0,2) {\bf SM};
\node (PR) at (2,2) {\bf PR};
\node (LTC) at (1,1) {\bf LTC};
\node (ZP) at (2,1) {\bf ZP};
\node (HP) at (0,1) {\bf HP};
\draw[->,dashed, very thick] (SP) -- (SM);
\draw[->,dashed, very thick] (SP) -- (PR);
\draw[->, very thick] (SM) -- (LTC);
\draw[->, very thick] (PR) -- (LTC);
\draw[->,dotted, very thick] (LTC) -- (SP);
\draw[<->, very thick] (ZP) -- (LTC);
\draw[<->, very thick] (HP) -- (LTC);

\node at (1.3,2) {$\ell_1$};
\node at (2.5,2.5) {$d_H(\Ccal)\geq 2$};
\node at (-.5,2.5) {$d_H(\Ccal)\geq 2$};
\end{tikzpicture}
\end{center}
\caption{Illustration of the implications in Theorem \ref{thm:equivalences}} \label{fig:equivalences}
\end{figure}
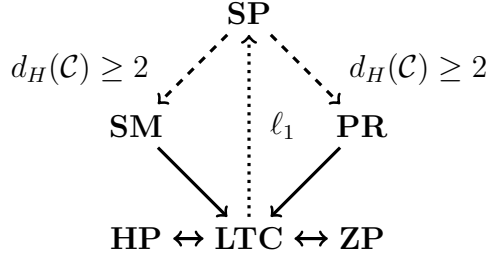

\begin{theorem} %[Summary of Relations]
\label{thm:equivalences} 
%Fix integers $n$ and $m$. For any $\Ccal \subset \{0,1\}^n$, the following hold. 
Let $n$ and $m$ be two non-negative integers and let $\Ccal$ be a subset of $\{0,1\}^n$. 
\begin{enumerate}
\item The properties {\bf LTC}, {\bf HP,} and {\bf ZP} are equivalent. 
\item If $\Ccal$ satisfies {\bf PR} or {\bf SM}, then it is contained in an {\bf LTC} set. 
\item If the vectors in $\Ccal$ are at least Hamming distance $2$ apart, then {\bf SP} implies both {\bf SM} and {\bf PR}. 
\item If the vectors in $\Ccal$ are at least Hamming distance $2$ apart and $\Ccal$ satisfies an $\ell_1$ property (see Theorem~\ref{zonotopecondpropo}), then {\bf LTC} implies {\bf SP}.  
\end{enumerate}
\end{theorem}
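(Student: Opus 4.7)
The plan is to treat each of the four items separately while exploiting throughout a single dictionary that converts every property in the theorem into a statement about the signs of $n$ affine functions evaluated on the hypercube $\{0,1\}^m$ or, dually, on a zonoset in $\R^n$.

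\textbf{Part~(1): the combinatorial trio.} An LTC is by definition the image of $\{0,1\}^m$ under a map $F=(f_1,\dots,f_n)$ with $f_i(h)=\mathbbm{1}[\alpha_i^\top h+\beta_i>0]$. I would view the $n$ hyperplanes $\{\alpha_i^\top h+\beta_i=0\}$ as an arrangement $\Acal$ in $\R^m$: its cells are labelled by the sign patterns of the $f_i$, and the $\Ccal$-cells contain cube vertices exactly when the LTC image equals $\Ccal$, giving LTC $\Leftrightarrow$ HP. Dually, packaging the affine functions into a single map $\Phi\colon\R^m\to\R^n$, the image $\Phi(\{0,1\}^m)$ is an $m$-zonoset whose $\Ccal$-orthants match the same sign patterns, so LTC $\Leftrightarrow$ ZP. Each equivalence is a definitional rewriting once the hyperplanes-versus-points dictionary is in place.

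\textbf{Part~(2): PR and SM embed into LTC.} For PR the proof is a definition chase: each $c\in\Ccal$ satisfies $c=v^{*}(h^{*}(c))$ with $v^{*}_i(h)=\mathbbm{1}[b_i+(Wh)_i>0]$, so $\Ccal$ is contained in the LTC whose threshold functions are $h\mapsto b_i+(Wh)_i$. For SM, I would start from the strong-mode inequality $\log p(c)-\log p(c^{(i)})>0$ for the RBM marginal and apply the integral softplus identity
\[
\operatorname{softplus}(\alpha)-\operatorname{softplus}(\alpha-\delta)=\delta\int_0^1\sigma(\alpha-(1-t)\delta)\,dt,
\]
to rewrite it as $s_i(b_i+(W\bar h^{(i)})_i)>0$ for some mean-field hidden state $\bar h^{(i)}\in(0,1)^m$, where $s_i=2c_i-1\in\{\pm 1\}$ is the flip direction and $\sigma$ denotes the sigmoid. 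Each such inequality places $\bar h^{(i)}$ in the positive halfspace of the $i$-th hyperplane, so $c$ is the sign pattern of a point of $[0,1]^m$; identifying the cell of the arrangement in which the inequalities simultaneously hold and locating a cube vertex in that cell produces an $h\in\{0,1\}^m$ with $v^{*}(h)=c$, placing $c$ in the LTC defined by $(b,W)$.

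\textbf{Parts~(3) and (4): SP and the Hamming distance hypothesis.} If $d_H(\Ccal)\geq 2$ and $p$ is supported exactly on $\Ccal$, then for every $c\in\Ccal$ and every one-bit neighbour $c^{(i)}$ we have $p(c)>0=p(c^{(i)})$, which is literally the definition of a strong mode, so SM follows; for PR, the same support structure forces $c$ to be the unique maximiser of $p(\cdot,h^{*}(c))$, giving $v^{*}(h^{*}(c))=c$. Because RBM distributions are strictly positive, SP must be read in the Euclidean closure, and this argument must be run on approximating RBMs with diverging parameter scale, with modes and MAP reconstructions shown to be stable under the limit precisely because $d_H(\Ccal)\geq 2$ leaves a gap between the $\Ccal$-mass and its neighbours. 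For the converse~(4), the LTC hypothesis supplies affine functions $\alpha_i^\top h+\beta_i$ whose threshold map enumerates $\Ccal$; I would take these as the RBM weights $(b,W)$, scaled by a large inverse temperature $\lambda$, and choose the hidden biases so that each fibre of the threshold map over a given $c\in\Ccal$ contributes non-negligibly to the partition function. The $\ell_1$ zonoset condition from Theorem~\ref{zonotopecondpropo} is exactly what guarantees that every element of $\Ccal$ can be realised as such a fibre without being absorbed into a larger one, yielding strict positivity on $\Ccal$ (and zero elsewhere) in the $\lambda\to\infty$ limit.

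The main obstacle I expect is the SM$\Rightarrow$LTC step in Part~(2): the mean-value identity produces a different witness $\bar h^{(i)}$ for each coordinate, so concluding the existence of a single binary $h\in\{0,1\}^m$ that realises all $n$ threshold equations simultaneously will require either a cell-containment argument inside the hyperplane arrangement or a perturbation/rounding lemma exchanging the $\bar h^{(i)}$ for a common cube vertex. Part~(4) is also delicate: forcing $\Ccal\subseteq\operatorname{supp}(p)$ is easy, but ruling out spurious vectors in the limit support is precisely where the $\ell_1$ hypothesis becomes indispensable.
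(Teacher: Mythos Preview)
Your Parts~(1), (3), (4), and the PR half of~(2) match the paper's argument. The real issue is the SM half of Part~(2), where your approach diverges from the paper's and carries exactly the gap you flag.

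Your softplus computation only consumes the ordinary mode inequalities $p(c)>p(c^{(i)})$, not the strong-mode inequality $p(c)>\sum_i p(c^{(i)})$. But modes of RBM marginals need not lie in the LTC image: an RBM is a mixture of $2^m$ product distributions, and such mixtures can have more than $2^m$ modes (Remark~\ref{firstcorollary}), while the LTC has at most $2^m$ elements. Hence no argument that uses only the per-coordinate inequalities can manufacture a single cube vertex $h$ witnessing all $n$ signs simultaneously; the rounding lemma you hope for does not exist at that level of generality, and your sentence ``so $c$ is the sign pattern of a point of $[0,1]^m$'' is already a non sequitur, since the witnesses $\bar h^{(i)}$ live in different halfspaces with no common point guaranteed.

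The paper instead invokes Theorem~\ref{sobrecomplmix}: if $c$ is a \emph{strong} mode of a mixture $\sum_h \lambda_h q_h$ of product distributions but is not the mode of any component $q_h$, then for each $h$ some neighbour $c^{(i(h))}$ satisfies $q_h(c^{(i(h))})\geq q_h(c)$, whence $\sum_i p(c^{(i)})\geq\sum_h \lambda_h q_h(c^{(i(h))})\geq p(c)$, contradicting strong mode. Applied with $q_h=p(\cdot\mid h)$ for the RBM, this hands you directly an $h\in\{0,1\}^m$ with $c=\operatorname{argmax}_v p(v\mid h)$, so $c$ lies in the image of the down-inference map, which is the LTC. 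This one-line pigeonhole uses the strong-mode hypothesis essentially and replaces your softplus detour entirely.
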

Figure~\ref{fig:equivalences} illustrates the result. 
The proof is given in Section \ref{section:summary} by combining results from Section~\ref{section:perspectives}.

\bigskip

Section~\ref{section:mixturesof} contains basic definitions and background on mixtures of product distributions and RBMs. 
Section~\ref{section:perspectives} discusses geometric perspectives on statistical models and inference, elaborated in various subsections. 
Section~\ref{section:inference} discusses inference functions, distributed representations, and reconstructability. 
Section~\ref{section:modes} discusses the concept of modes and polyhedral approximations of probability models.  
Section~\ref{section1} covers the sets of modes of probability distributions realizable as mixtures of product distributions (Theorem~\ref{sobrecomplmix}). 
Section~\ref{section2} makes a few initial observations on the sets of modes of probability distributions realizable by RBMs. 
In Section~\ref{section:zonosets} these sets are related to zonosets and hyperplane arrangements (Theorem~\ref{zonotopecondpropo}), and in Section~\ref{section:LTC} to linear threshold codes. 
Section~\ref{section6} discusses multi-covering numbers; the smallest hyperplane arrangements slicing each edge of a hypercube a given number of times. 
Section~\ref{section:summary} contains the proof of Theorem~\ref{thm:equivalences}. 
Turning to the motivating questions, Section~\ref{section:when} contains our analysis of Problem~\ref{problem:When}, 
treating the inclusion of RBMs in mixture models in Section~\ref{section3}, and the reverse inclusion in Section~\ref{section5}. 
Section~\ref{sec:discussion} offers a discussion. 

\section{Mixtures of products and products of mixtures} \label{section:mixturesof}

Let $\mathcal{P}_n$ denote the $(2^n - 1)$-dimensional simplex of joint probability distributions of $n$ binary variables. 
This is the set of vectors $p\in \R^{2^n}$ with entries $p(x)\geq 0$, $x=(x_1,\ldots,x_n)\in\{0,1\}^n$, 
satisfying $\sum_{x\in\{0,1\}^n}p(x)=1$. 
For any given $p\in\Pcal_n$, the marginal distribution of the $i$-th variable is the vector $p_i\in\Pcal_1$ with entries $p_i(x_i) = \sum_{(x_1, \dots, x_{i-1},x_{i+1} \dots x_n) \in \{0,1\}^{n-1}}p(x)$, $x_i\in\{0,1\}$. 

Let $\Mcal_{n,1}$ denote the $n$-dimensional set of joint probability distribution of $n$ independent binary variables. 
This is the set of distributions $p\in\Pcal_n$ that factorize as $p(x)=p_1(x_1)\cdots p_n(x_n)$, $x=(x_1,\ldots, x_n)\in\{0,1\}^n$, called product distributions. 
One can regard these distributions as the $n$-way $2\times\cdots\times 2$ tables of the form $p_1\otimes\cdots\otimes p_n$, $p_i\in\Pcal_1$, $i\in[n]$,  where $\otimes$ denotes the tensor product. 
We note that $\Mcal_{n,1}$ is the closure of the exponential family 
$p_B(x)= \frac{1}{Z(B)}\exp(B^\top x)$, $x\in\{0,1\}^n$, with natural parameter $B\in\R^n$ and normalization function $Z(B)=\sum_{y\in\{0,1\}^n}\exp( B^\top y)$. Closure is needed in order to include probability distributions with vanishing entries. 

\begin{definition}\label{def:mixtureofproducts}
The {\em $k$-mixture of product distributions of $n$ binary variables}, denoted $\Mcal_{n,k}$, is the set of distributions on $\{0,1\}^n$ expressible as convex combinations $p = \sum_{i\in[k]}\lambda_i q^{(i)}$, where $\lambda_i \geq 0$, $\sum_{i\in[k]}\lambda_i = 1$, and $q^{(i)}\in\Mcal_{n,1}$ for all $i\in[k]$. 
\end{definition}
Up to positive scalar multiples, $\Mcal_{n,k}$ corresponds to the set of $n$-way $2\times\cdots\times 2$ tables with non-negative rank at most $k$. 
The Zariski closure of $\Mcal_{n,k}$ in complex projective space is the $k$-th secant variety of the $n$-th Segre product of $\PP^1$s and has the same dimension as $\Mcal_{n,k}$. 
As it turns out, this is the dimension expected from counting parameters, equal to $\min\{nk + (k-1),2^n - 1\}$, except when $(n,k) = (4,3)$, in which case it has dimension $13$ instead of $14$. See~\cite{Catalisano2011}, which answered this century-old question in algebraic geometry. 

The set $\Mcal_{n,k}$ is equal to the probability simplex $\Pcal_n$ if and only if $k\geq 2^{n-1}$, see~\cite{Montufar2010a}. 
In particular, the smallest $\Mcal_{n,k}$ that equals $\Pcal_n$ has $2^{n-1}(n+1)-1$ parameters. 

\begin{definition}\label{def:RBM}
The {\em RBM model} with $n$ visible and $m$ hidden binary units, denoted $\RBM_{n,m}$, is the closure of the set of distributions on $\{0,1\}^n$ of the form 
%that can be approximated arbitrarily well by  distributions  of the form 
\begin{equation}
p(x) = \frac{1}{Z(W,B,C)} \sum_{h\in\{0,1\}^m}\exp(h^\top W x + B^\top x + C^\top h)\quad\text{ for all $ x\in\{0,1\}^n$},
\label{eq:rbmdef}
\end{equation}
where $W \in \R^{m\times n}$ is a matrix of interaction weights between hidden and visible units (with state vectors $h$ and $x$, respectively), $B \in \R^n$ is a vector of biases of the visible units, $C \in \R^m$ is a vector of biases of the hidden units, and $Z(W,B,C)=\sum_{x\in\{0,1\}^n}\sum_{h\in\{0,1\}^m}\exp(h^\top W x + B^\top x + C^\top h)$ is a normalization function. 
\end{definition} 
An RBM is a {\em  product of experts}~\cite{Hinton99productsof}; each hidden unit corresponds to an expert which is a mixture of two product distributions~\cite{Cueto2010}. For completeness we provide a proof of this statement:  
\begin{proposition}
Each RBM distribution of the form~\eqref{eq:rbmdef} is a renormalized entry-wise product of positive mixtures of pairs of positive product distributions, and vice versa. 
\end{proposition}
\begin{proof}	 
Each distribution $p$ of the form~\eqref{eq:rbmdef} can be written as a renormalized entry-wise product 
$$p(x)= \frac{q^{(1)}(x)\cdots q^{(m)}(x)}{\sum_{y\in\{0,1\}^n} q^{(1)}(y)\cdots q^{(m)}(y)}\quad\text{for all } x\in\{0,1\}^n,$$ 
where $q^{(j)} = \lambda_j p_{A_j} + (1-\lambda_j) p_{A'_j} \in\Mcal_{n,2}$ is a positive mixture of positive product distributions, with $\lambda_j\in(0,1)$, $p_{A_j}=\exp(A_j^\top x)/Z(A_j)$ and $p_{A'_j}=\exp({A'_j}^\top x)/Z(A'_j)$, for all $j\in[m]$. 
To see this, note that 
\begin{align*}
\sum_{h\in\{0,1\}^m}\exp(h^\top W x + B^\top x + C^\top h) 
=& \prod_{j\in[m]}\sum_{h_j\in\{0,1\}}\exp(h_j W_{j} x + \tfrac{1}{m}B^\top x + C_j h_j) \\
=& \prod_{j\in[m]} \left(\exp(\tfrac{1}{m}B^\top x) + \exp( C_j) \exp(  ( W_{j} +\tfrac{1}{m} B^\top) x) \right)\\
\propto& \prod_{j\in[m]} \left( \lambda_j p_{A_j}(x) + (1-\lambda_j) p_{A'_j}(x) \right), 
\end{align*}
where $W_j$ is the $j$-th row of $W$, $A_j=B/m$, $A'_j=W_j^\top + (B/m)$, and $\lambda_j =  Z(A_j) /( Z(A_j)+ Z(A'_j) \exp(C_j) )$. 

Conversely, each renormalized entry-wise product of positive mixtures of pairs of positive product distributions is of the form~\eqref{eq:rbmdef}. To see this, note that, for any choice of $A_j, A'_j\in \R^n$ and $\lambda_j\in(0,1)$, % for $j\in[m]$,   
\begin{align*}
\prod_{j\in[m]} \Big( \lambda_j p_{ A_j } (x) + (1-\lambda_j)& p_{ A_j' }(x) \Big) \\
=& \prod_{j\in[m]} \left(  \frac{\lambda_j}{Z(A_j)}\exp(A_j^\top x) +  \frac{(1-\lambda_j)}{Z(A'_j)}\exp( {A'_j}^\top  x) \right) \\
\propto& \prod_{j\in[m]}\exp({A_j}^\top x) \left( 1  +  \frac{Z(A_j)}{\lambda_j}\frac{ (1-\lambda_j) }{  Z(A'_j)}\exp( ( A'_j - A_j)^\top  x) \right) \\
\propto& \sum_{h\in\{0,1\}^m}\exp(h^\top W x + B^\top x + C^\top h) , 
\end{align*}
where $B = \sum_{j\in[m]}A_j$, $W_j = A'_j  - A_j$, and $C_j =\log \left({ Z(A_j) (1-\lambda_j)} / { \lambda_j Z(A'_j)}\right)$.  
\end{proof}

An RBM can also be seen as a set of restricted mixtures of product distributions; 
each $p\in\RBM_{n,m}$ is a mixture of $2^m$  product distributions, namely the conditionals 
$$p(x|h) = \frac{\exp((h^\top W +B^\top)x) }{ \sum_{y\in\{0,1\}^n}\exp((h^\top W+B^\top) y)} = p_{W^\top h+B}(x) \quad \text{for all $h\in\{0,1\}^m$}.$$  

In general, the dimension of the mixture model $\Mcal_{n,2^m}$ is much larger than that of $\RBM_{n,m}$. 
The set $\RBM_{n,m}$ is known to have dimension $nm+n+m$ when $m < 2^{n-\lceil\log_2(n+1)\rceil}$, 
and $2^n - 1$  when $m\geq 2^{n-\lfloor\log_2(n+1)\rfloor}$, see~\cite{Cueto2010}. 
In addition, it is known that $\RBM_{n,m}$ equals $\Pcal_n$ whenever $m \geq 2^{n-1}-1$, see~\cite{Montufar2011}. 
It is not known if the latter bound is always tight, but it shows that the smallest $\RBM_{n,m}$ that equals $\Pcal_n$ has not more than $2^{n-1}(n+1)-1$ parameters, and hence not more than the smallest mixture of products model. 

We will show that the sets of probability distributions representable by RBMs and mixtures of products are quite different. 
The intersection of both model classes has been studied in~\cite{MonRauh2011}, 
where it is shown that $\RBM_{n,m}$ contains any mixture of $m+1$ product distributions with disjoint supports, 
and hence that the intersection $\RBM_{n,m} \cap \Mcal_{n,m+1}$ has dimension of order at least $m n + m + 2 n + 1 - (m-1)\log_2(m+1)$. 
Typically RBMs have many binary hidden variables and mixtures of products models have a single multivalued hidden variable. 
The transition between these two limit cases has been studied in~\cite{montufar2013discrete}, 
focusing on the Kullback-Leibler model approximation errors and the model dimension. 

\section{Geometric perspectives}\label{section:perspectives}

In this section we present five points of view on the families of probability distributions defined in the previous section. 
We consider inference functions and hidden representations defined by these models (Section \ref{section:inference}), 
modes and strong modes of their marginal distributions (Sections \ref{section:modes}, \ref{section1}, and \ref{section2}), 
zonosets, hyperplane arrangements, and linear threshold codes that capture their combinatorial structure (Sections~\ref{section:zonosets} and~\ref{section:LTC}), 
and the resulting multi-covering numbers of hypercubes (Section \ref{section6}). 

Each point of view comes with particular set of related tools and implications for the capabilities of RBMs and competing models. We determine how these five concepts are related, which imply which, and how properties such as the number of strong modes in a marginal distribution translate into each perspective. 
These observations are summarized in Section~\ref{section:summary}, and in Section~\ref{section:when}, where they are applied to distinguish mixtures of products and products of mixtures.

\subsection[Inference functions, representations, reconstructability]{Inference functions, distributed representations, \newline reconstructability}\label{section:inference}

\begin{figure}%[h!]
\begin{center}
\setlength{\unitlength}{1cm}
\begin{picture}(10.7,4.6)(0.4,-0.2)
\put(0,0){
\includegraphics[clip=true,trim=2cm 18.2cm 6cm 1.4cm,width=11cm]{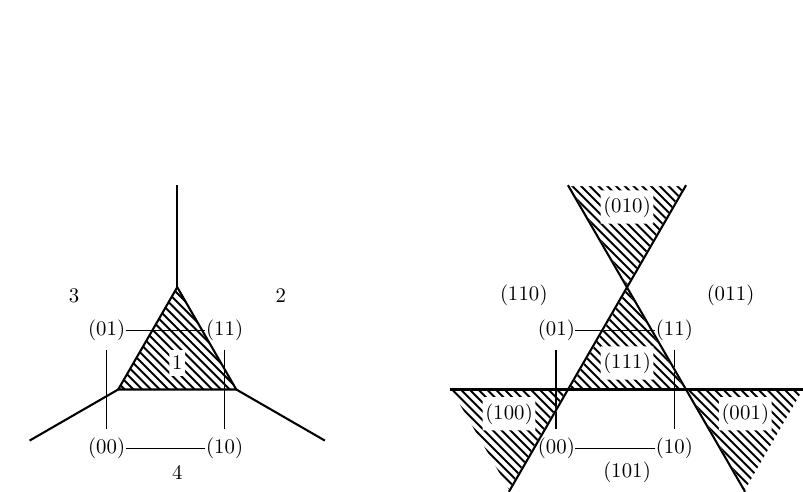}
}
\end{picture}
\end{center}
\caption{Inference regions of $\Mcal_{2,4}$ (left) and $\RBM_{2,3}$ (right), for a choice of parameters.  
}\label{inferenceregions}
\end{figure}

Hinton~\cite{Hinton99productsof} discusses advantages of products of experts (Hadamard products of probability models) over  mixtures of experts (mixtures of probability models), for modeling ``high-dimensional data which simultaneously satisfies many low-dimensional constraints.'' In products of experts models, each expert can individually  ensure that one constraint is satisfied. 
In the case of RBMs, each hidden unit linearly divides the input space according to its preferred state given the input, which results in a {\em multi-clustering}, or a partition of the input space into cells where different joint hidden states are most likely. 
Inference of the most likely hidden state given an input produces a distributed encoding or {\em distributed representation} of the input vector, as discussed by Bengio in~\cite[Section~5.3]{Bengio-2009}. 

\begin{definition}
The {\em inference function} of a probability model $p_\theta(v,h)$ with parameter $\theta\in\R^N$ `explains' each value of $v$ by the most likely value of $h$ according to $\up_\theta \colon v\mapsto \operatorname{argmax}_h p_\theta(h|v)$. This defines a partition of the input space into the preimages of all possible outputs, called {\em inference regions}. 
\end{definition}

Inference functions provide a combinatorial view on the corresponding probability models. 
They appear naturally in the context of {\em tropicalization}, 
where they correspond to the linear regions of certain piecewise linear approximations of algebraic varieties and serve to estimate their dimension. This approach has been studied in~\cite{Draisma} in the context of secants and in~\cite{Cueto2010, montufar2013discrete, Kronecker} in the context of RBMs. 

For each choice of parameters $W\in\R^{m\times n}$, $B\in\R^n$, $C\in\R^m$, the model $\RBM_{n,m}$ defines the inference function 
\begin{equation*}
\up_{W,B,C}\colon\;  \R^n \supset\{0,1\}^n\to \{0,1\}^m;\; v\mapsto \operatorname{argmax}_{h\in\{0,1\}^m}  h^\top (W v + C) \;.
\end{equation*} 
The visible state $v$ is explained by the hidden state $h$ which satisfies $\sgn(W v + C)=\sgn(h^\top-\tfrac{1}{2}\mathds{1})$, where $\mathds{1}:= (1,\ldots,1)$. 
There may be several explanations for a given observation, but generically there is only one. 
Geometrically, the input space $\R^n$ is partitioned into the preimages of the orthants of $\R^m$ by the affine map $\psi\colon \R^n\to\R^m; v\mapsto Wv+C$. 
This partition corresponds to the intersection of an affine space and the normal fan of an $m$-cube (the orthants of $\R^m$). 
The number of inference regions can be as large as $\Ca(m,d)=\sum_{i=0}^{d}{m \choose i}$, which is the number of orthants of $\R^m$ intersected by a generic $d$-dimensional affine subspace, where $d\leq \min\{n,m\}$ is the rank of $W$. When the rank of $W$ is less than $m$ (for example, when $m>n$), then the image of the map $\psi$ does not intersect all orthants of $\R^m$ and there are `empty' inference regions, i.e., states $h$ which are not the explanation of any input vector~$v$. 

The mixture model $\Mcal_{n,k}$, on the other hand, defines, for any choice of the mixture weights $\lambda_i$ and the natural parameters of each mixture component $B_{i}\in\R^n$ for $i\in[k]$, an inference function 
\begin{equation*}
\up_{\lambda,B}\colon\; \R^n \supset\{0,1\}^n\to \{1,\ldots, k\};\; v\mapsto \operatorname{argmax}_{i\in[k]} (B_{i}^\top v -\log(Z(B_i))+ \log (\lambda_{i})),  
\end{equation*} 
where $Z(B_{i})=\sum_{v\in\{0,1\}^n} \exp(B_{i}^\top v)$. 
In this case, the input space $\R^n$ is partitioned into the at most $k$ regions of linearity of the function $v\mapsto \max\{ B_{i}^\top v - \log (Z(B_{i})) +\log(\lambda_i)\colon i\in[k]\}$. 
This partition corresponds to the intersection of an affine space and the normal fan of a $(k-1)$-simplex. 

Figure~\ref{inferenceregions} shows an example of inference regions in $\{0,1\}^2\subset\R^2$ defined by $\Mcal_{2,4}$ (left panel) and $\RBM_{2,3}$ (right panel), for some specific parameter values. Both models have $7$ parameters and are universal approximators of distributions on $\{0,1\}^2$, but they define very different inference regions. 

For a fixed input space of dimension $n$, the number of inference regions in $\R^n$ that can be realized by $\RBM_{n,m}$ is of order $\Theta({m\choose \min\{n,m\}})$, which is exponential in the number of parameters of the model, whereas the number of inference regions that can be realized by $\Mcal_{n,k}$ is linear in the number of parameters of the model.  A function $g\colon \R_+\to\R_+$ is of order $\Theta(f)$ if there exist positive constants $C,C'$ and $n_0$ such that $Cf(n) \leq g(n) \leq C'f(n)$ for all $n\geq n_0$~\cite{knuth1976big}. 
Distributed representations can, in principle, learn different explanations to a number of observations that is exponential in the number of model parameters, see~\cite{Bengio-2009}. 

Now we discuss reconstructability. 
Similarly to the $\up_\theta$ inference function, a model $p_\theta(v,h)$ defines a $\down_\theta$ inference function, which outputs the most likely visible state $\operatorname{argmax}_v p_\theta(v|h)$ given a hidden state $h$.  

\begin{definition}\label{def:perfectlyreconstructible}
Given a probability model $p_\theta(v,h)$ on $v\in\Xcal$ and $h\in\Ycal$, 
a collection of states $\Ccal\subseteq\Xcal$ is {\em perfectly reconstructible} if there is a choice of the parameter $\theta$ for which $\down_\theta(\up_\theta(v))=v$ for all $v\in\Ccal$. 
\end{definition}
The ability to reconstruct input vectors is sometimes used to evaluate the performance of RBMs in practice, since it can be tested more cheaply than the probability distributions they represent. 
The reconstructability of input vectors can also be used to define training algorithms, like in the case of auto-encoders. 

When writing the joint probabilities $(p_\theta(v,h))_{v,h}$ as a matrix with rows labeled by $h\in\Ycal$ and columns by $v\in\Xcal$, 
a set $\Ccal\subseteq\Xcal$ is perfectly reconstructible iff there is a choice of the model parameter $\theta$ for which $p_\theta(v, \up_\theta(v))$ is the unique maximal entry in the $\up_\theta(v)$-row (and in the $v$-column) for all $v\in\Ccal$. 
For the model $\RBM_{n,m}$, this is the case exactly when for each $v\in\Ccal\subseteq\{0,1\}^n$ there is an $h_v\in\{0,1\}^m$ with $\sgn(W v + C)=\sgn(h_v-\frac{1}{2}\mathds{1})$ and $\sgn(h_v^\top W +B^\top)=\sgn(v-\frac{1}{2}\mathds{1})$. 
\begin{example}
If $n,m\geq k$, all cylinder subsets of $\{0,1\}^n$ of dimension $k$ are perfectly reconstructible by $\RBM_{n,m}$. 
A $k$-dimensional cylinder subset of $\{0,1\}^n$ is a set of the form $\{ x\in\{0,1\}^n\colon x_i=y_i \text{ for all } i\in I \}$, where $I$ is a subset of $[n]$ of cardinality $|I|=n-k$, and $y_i\in\{0,1\}$, $i\in I$ are fixed values. 
Without loss of generality let $I=\{k+1,\ldots, n\}$. 
Consider the following choice of parameters. 
Define $W_{i,j} = \delta_{i,j}$ for $i,j\leq k$ and zero else, define $B_j=-\tfrac{1}{2}$ for $j\leq k$ and $B_j=y_j - \tfrac{1}{2}$ else, and $C=-\tfrac{1}{2}\mathds{1}$. 
Then $\sgn(W v +C) = \sgn(v_1-\tfrac{1}{2},\ldots, v_k-\tfrac{1}{2},  -\tfrac{1}{2},\ldots, -\tfrac{1}{2})$, so that $h_v=(v_1,\ldots, v_k, 0,\ldots, 0)$. 
Furthermore, $\sgn(h_v^\top W + B^\top) = \sgn(v_1-\tfrac{1}{2}, \ldots, v_k -\tfrac{1}{2}, y_{k+1}-\tfrac{1}{2},\ldots, y_n-\tfrac{1}{2})$. 
\end{example} 
Later we will study the ability of RBMs to reconstruct more complicated sets of binary inputs, and how the sets of reconstructible inputs relate to the visible probability distributions represented by the model.

\subsection{Modes}\label{section:modes}

We will characterize the ability of RBMs and mixtures of product distributions to represent distributions with many strong modes, in order to draw a distinction between them. 
As an interesting side remark, note that similar questions, about the number of modes of mixtures of multivariate normal distributions, have been posed in~\cite{Ray}, and that the maximal number of modes realizable by mixtures of $k$ normal distributions on $\mathbb{R}^n$ is unknown.  

\begin{definition}\label{strongmodedef}
Let $p$ be a probability distribution on a finite set $\Xcal$ of length-$n$ vectors. 
A vector $x\in\Xcal$ is a {\em mode} of $p$ if $p(x) > p(y)$ for all $y\in\Xcal$ with $d_H(y,x)=1$, and a {\em strong mode} if $p(x) > \sum_{y\in\Xcal:d_H(y,x)=1}p(y)$. 
Here $d_{H}(y,x):=|\{i\in[n]\colon y_i\neq x_i\}|$ is the Hamming distance between $y$ and $x$. 
\end{definition}

The modes of a distribution are the Hamming-locally most likely events in the space of possible events. 
Modes are closely related to the support sets and boundaries of statistical models, 
which have been studied especially for hierarchical and graphical models without hidden variables~\cite{geiger:2006,kahle:2009,RKA10:SupportSetsOrMat}. 

We write $\mathcal{G}_{n,m}$ (and $\mathcal{H}_{n,m}$) for the set of distributions in $\Pcal_n$ which have at least $m$ modes (strong modes). For any set $\Ccal\subset\{0,1\}^n$ of vectors with Hamming distance at least $2$ from each other, we write $\Gcal_\Ccal$ (and $\Hcal_\Ccal$) for the set of distributions which have modes (strong modes) $\Ccal$. 
The closures $\overline{\Gcal_\Ccal}$  (and $\overline{\Hcal_\Ccal}$) are convex polytopes inscribed in the probability simplex $\Pcal_n$. 
The sets of modes that are not realizable by a probability model give a full dimensional polyhedral approximation of the model's complement. 
See Figure~\ref{fig2} for an example. 
We will focus most of our consideration on strong modes. These are easier to study than modes, because they are described by fewer inequalities. 

\begin{figure}%[h!]
\begin{center}
\setlength{\unitlength}{5cm}
\begin{picture}(1.8,1.18)(0.1,0.075)
\put(0,0){\includegraphics[clip=true,trim=2.7cm 8.5cm 2cm 9.8cm, width=10cm]{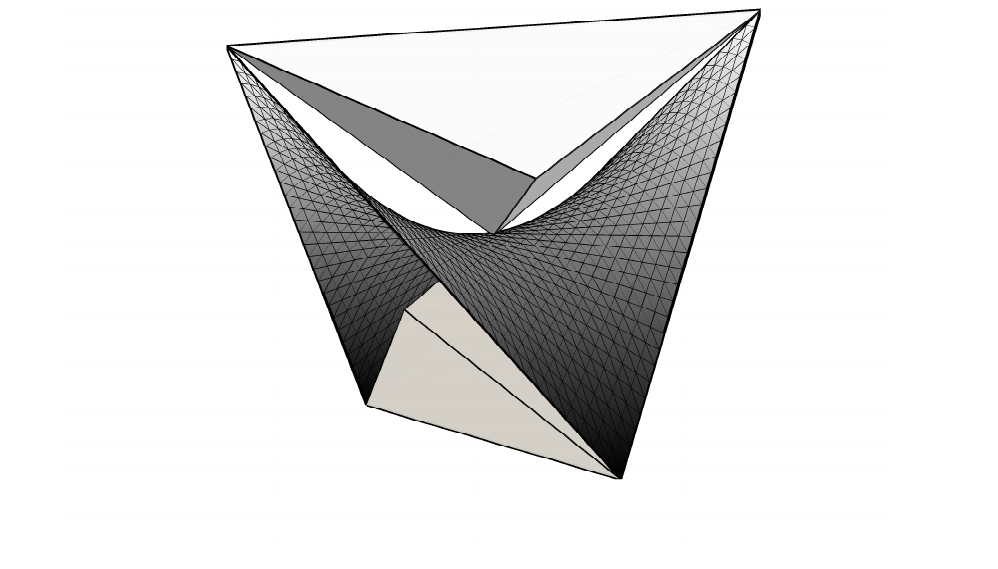}}
\put(.93,1.17){$\mathcal{G}_{2}^+$}
\put(.9,.12){$\mathcal{G}_{2}^-$}
\put(1.39,.54){$\Mcal_{2,1}$}
\put(0.32,1.11){\small $\delta_{(0 0)}$}
\put(1.5,1.17){\small $\delta_{(1 1)}$}
\put(0.6,.22){\small $\delta_{(0 1)}$}
\put(1.2,.08){\small $\delta_{(1 0)}$}
\end{picture}
\end{center}
\caption[Bimodal mixtures of product distributions of two binary variables]{
The $3$-dimensional simplex of probability distributions on $\{0,1\}^2$ (a tetrahedron with vertices corresponding to the outcomes $(00),  (01), (10), (11)$), with three sets of probability distributions depicted. The dark curved surface is the 2-dimensional manifold $\Mcal_{2,1}$ of product distributions of two binary variables. 
The angular regions at the top and bottom are the polyhedra $\Gcal_2^+$ and $\Gcal_2^-$ of distributions with two modes. 
An interactive $3$-D graphic object is available at~\url{http://personal-homepages.mis.mpg.de/montufar/surface.pdf}. %(open in Adobe Acrobat Reader 7 or higher). 
}\label{fig2}
\end{figure}

The minimum Hamming distance of a set $\Ccal\subseteq\Xcal$ is defined as $d_H(\Ccal):=\min\{d_H(x,y)\colon x\neq y\text{ and } x,y\in \Ccal\}$. Since any two modes have at least Hamming distance two from each other, a distribution on $\{0,1\}^n$ has at most $2^{n-1}$ modes. There are exactly two subsets of $\{0,1\}^n$ with cardinality $2^{n-1}$ and minimum distance two. These are the sets of binary strings with an even, respectively odd, number of entries equal to one: 
\begin{eqnarray*}
Z_{+,n}&:=&\Big\{(x_1,\ldots,x_n)\in\{0,1\}^n\colon \sum_{i\in[n]} x_i \text{ is even} \Big\}; \\ 
Z_{-,n}&:=&\Big\{(x_1,\ldots,x_n)\in\{0,1\}^n\colon \sum_{i\in[n]} x_i \text{ is odd} \Big\}.
\end{eqnarray*}  
We write $\Gcal_{n,2^{n-1}}=\Gcal_{Z_{+,n}}\cup\Gcal_{Z_{-,n}}$, or $\Gcal_n=\Gcal_{n}^+\cup\Gcal_n^-$ for short, and similarly $\Hcal_n=\Hcal_n^+\cup\Hcal_n^-$. 
Figure~\ref{fig2} illustrates the set $\Gcal_2 =\Gcal_2^+\cup \Gcal_2^-\subset \Pcal_2$ (the set of distributions on $\{0,1\}^2$ with two modes), and the two-dimensional manifold $\Mcal_{2,1}\subset\Pcal_2$ (the set of product distributions on $\{0,1\}^2$). 
We see that $\Pcal_2$ contains $6$ disjoint sets congruent to $\Gcal_2^+$ whose union's closure equals $\Pcal_2$. 
Hence the Lebesgue volume satisfies $\vol(\Gcal_2^+)/\vol(\Pcal_2)=1/6$. 
The case of three bits is as follows.

\begin{example}
The subset $\Gcal_{3}^+\subset \Pcal_3$ of distributions on $\{0,1\}^3$ with four modes $Z_{+,3}$ is the intersection of $\Pcal_3$ and $12$ open half-spaces defined by $ p(x) > p(y)$ for all $y \text{ with } d_H(x,y)=1$ for all $x\in Z_{+,3}$. 
The closure of this set is a convex polytope $\overline{\Gcal^+_{3}}$ with $19$ vertices. 
The vertices are uniform distributions on subsets of $\{0,1\}^3$ that can be covered by three disjoint cylinder subsets of $\{0,1\}^3$. 
The list of vertices and vertex-facet incidences are provided in Tables~\ref{table:vertices} and~\ref{table:facets}, in the appendix. 
The Lebesgue volume of this polytope can be computed (e.g., using the software \texttt{Polymake}~\cite{polymake}): $\vol(\mathcal{G}_3^+)/\vol(\mathcal{P}_3) = 1/56 = 0.017\overline{857142}$. 

The subset $\Hcal_{3}^+\subset \Gcal_3^+$ of distributions with four strong modes $Z_{+,3}$ is the intersection of $\Pcal_3$ and the $4$ open half-spaces defined by $p(x) > \sum_{y:d_H(x,y)=1}p(y)$ for all $x\in Z_{+,3}$. 
We will discuss it in Section~\ref{sec:polyapprRBM32}. 
\end{example}

\subsection{Modes of mixtures of products}\label{section1}

In this section we characterize the sets of modes and strong modes that can appear in mixtures of product distributions, and show how these can be used to obtain a polyhedral approximation of the set of probability distributions representable in such models.

\begin{problem}\label{questionsec}\label{questionprim}
What is the smallest $k\in\mathbb{N}$ for which $\Mcal_{n,k}$ contains a distribution with $l$ (strong) modes? 
\end{problem}

A mixture of $k$ unimodal discrete probability distributions has at most $k$ strong modes. 
For mixtures of products we have the following. 

\begin{theorem}\label{sobrecomplmix}
Let $\Xcal_1,\ldots, \Xcal_n$ be finite sets. 
Let $\Mcal$ be the set of all possible mixtures of $k$ product distributions of $n$ variables with state spaces $\Xcal_i$, $i\in[n]$. 
If $p\in\Mcal$ has strong modes $\Ccal$, then every $c\in\Ccal$ is the mode of one mixture component of $p$. 
The sets of strong modes of distributions within $\Mcal$ are exactly the sets of strings in $\Xcal_1\times\cdots\times\Xcal_n$ of minimum Hamming distance at least two and cardinality at most $k$. 
\end{theorem}

\begin{proof}
A product distribution $q$ has at most one mode. This follows from the fact that the value of $q(x_1,\ldots, x_n) = q_1(x_1)\cdots q_n(x_n)$ is either maximal, or can be increased by changing only one entry of $x$.  If $q^{(j)}$, $j\in[k]$, are product distributions and $x$ is not a mode of any $q^{(j)}$, then $\sum_{y:d_H(y,x)=1}\sum_{j\in[k]}\alpha_j q^{(j)}(y) \geq \sum_{j\in[k]}\alpha_j q^{(j)}(x)$ for any $\alpha_j\geq 0$. In turn, $x$ is not a strong mode of any mixture of the $q^{(j)}$. 
On the other hand, the set of product distributions contains every point measure $\delta_y$, since the latter can be written as $\delta_y(x) = q_1(x_1)\cdots q_n(x_n)$ with $q_i(y_i)=1$ for all $i\in[n]$. Hence the mixture of products model ${\Mcal}$ contains every distribution of the form $\sum_{y\in\Ccal}\frac{1}{|\Ccal|}\delta_y$ for any $\Ccal\subseteq\Xcal$ with $|\Ccal|\leq k$. 
\end{proof}

By the previous theorem, a mixture of $k$ product distributions can have at most $k$ strong modes. 
Nevertheless, a mixture of $k$ product distributions can have more than $k$ modes. Here is an example: 

\begin{example}\label{firstcorollary} 
	\newcommand{\sm}[2]{\left(\begin{smallmatrix}#1\\ #2\end{smallmatrix}\right)}
	\newcommand{\smo}[4]{\!\!\begin{smallmatrix}#1 & #2\\#3 & #4\end{smallmatrix}\!\!}
	\newcommand{\pz}{0}
The mixture 
$p=\tfrac{1}{2} p^{(1)} + \tfrac{1}{2} p^{(2)}$ 
of the following two product distributions of four binary variables 
has three modes $(0000)$, $(1100)$, and $(1111)$:  
\begin{flalign*}
p^{(1)} 
= \tfrac{1}{5}\sm{3}{2} \otimes \tfrac{1}{5}\sm{3}{2} \otimes \sm{1}{0}\otimes \sm{1}{0} 
&= \tfrac{1}{25} \left[ \begin{array}{c|c} \smo{9}{6}{6}{4} & \smo{\pz}{\pz}{\pz}{\pz} \\ \hline \smo{\pz}{\pz}{\pz}{\pz} & \smo{\pz}{\pz}{\pz}{\pz} \end{array} \right]; \\ 
p^{(2)} 
= \sm{0}{1} \otimes \sm{0}{1} \otimes \tfrac{1}{5}\sm{2}{3} \otimes \tfrac{1}{5}\sm{2}{3} 
&= \tfrac{1}{25} \left[ \begin{array}{c|c} \smo{\pz}{\pz}{\pz}{4} & \smo{\pz}{\pz}{\pz}{6} \\ \hline \smo{\pz}{\pz}{\pz}{6} & \smo{\pz}{\pz}{\pz}{9}  \end{array} \right].
\end{flalign*}
\end{example}

Theorem~\ref{sobrecomplmix} shows that the set $\mathcal{H}_{n,k+1}$ is in the complement of $\Mcal_{n,k}$ for all $k$. 
We can triangulate $\Hcal_{n,k+1}$ and thereby lower bound the (Lebesgue) volume of the complement $\Pcal_n\setminus \Mcal_{n,k}\supseteq \Hcal_{n,k+1}$. 
A rough estimate is:
 
\begin{proposition}\label{volmodesset}
Let $k<2^{n-1}$. The volume of $\Hcal_{n,k+1}$ satisfies 
$\vol(\mathcal{H}_{n,k+1})/\vol(\Pcal_n) \geq  2^{-(k+1) n}K(k+1)$,  
where $K(k+1)= 2^{k+1}$ if $k+1\leq 2^s<\frac{2^n}{n}$ for some $s\in\mathbb{N}$, and $K(k+1)=2$ otherwise. 
\end{proposition}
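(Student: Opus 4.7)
For any $\Ccal \subseteq \{0,1\}^n$ with $|\Ccal| = m+1$ and $d_H(\Ccal) \geq 2$, let
\[
\Hcal_\Ccal^+ := \Bigl\{p \in \Pcal_n : p(c) > \sum_{y : d_H(y,c)=1} p(y) \text{ for all } c \in \Ccal\Bigr\}.
\]
Every $p \in \Hcal_\Ccal^+$ has $\Ccal$ among its strong modes, so $\Hcal_\Ccal^+ \subseteq \Hcal_{n,m+1}$. The strategy is to exhibit $K(m+1)$ such subsets $\Ccal_j$ whose associated $\Hcal_{\Ccal_j}^+$ are pairwise disjoint, and to lower-bound the volume of each individually by $2^{-(m+1)n}\vol(\Pcal_n)$.

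For a single $\Hcal_\Ccal^+$ I would perform the triangular linear substitution $r(c) := p(c) - \sum_{y \sim c} p(y)$ for $c \in \Ccal$, keeping the coordinates $p(y)$ with $y \notin \Ccal$ unchanged. This substitution has Jacobian $1$; the strong-mode inequalities become $r(c) > 0$ (non-negativity of $p(c)$ is then automatic from $p(y) \geq 0$); and the normalization $\sum p = 1$ becomes $\sum_{c \in \Ccal} r(c) + \sum_{y \notin \Ccal}(1 + N(y))\, p(y) = 1$, where $N(y) := |\{c \in \Ccal : d_H(y,c) = 1\}|$ and $\sum_{y} N(y) = n|\Ccal|$. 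Since $d_H(\Ccal) \geq 2$, we have $N(y) = 0$ for $y \in \Ccal$. A further rescaling $u(y) := (1+N(y))\, p(y)$ maps the region bijectively onto the standard simplex $\Pcal_n$ with Jacobian $\prod_{y \notin \Ccal}(1+N(y))^{-1}$, giving
\[
\frac{\vol(\Hcal_\Ccal^+)}{\vol(\Pcal_n)} = \prod_{y \notin \Ccal}(1+N(y))^{-1} \geq 2^{-\sum_y N(y)} = 2^{-(m+1)n},
\]
where the inequality uses $1+N \leq 2^N$ for integer $N \geq 0$.

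For disjointness, I would observe that $\Hcal_{\Ccal_1}^+ \cap \Hcal_{\Ccal_2}^+ = \emptyset$ as soon as there exist $c_1 \in \Ccal_1$ and $c_2 \in \Ccal_2$ with $d_H(c_1,c_2) = 1$, because the two strong-mode inequalities would then force both $p(c_1) > p(c_2)$ and $p(c_2) > p(c_1)$. In the $K(m+1) = 2$ regime I would take $\Ccal_1 \subset Z_{+,n}$ and $\Ccal_2 \subset Z_{-,n}$ of size $m+1$, arranged to contain a Hamming-adjacent pair across the two sets; this is possible because $m+1 \leq 2^{n-1}$. In the $K(m+1) = 2^{m+1}$ regime, using the hypothesis $m+1 \leq 2^k \leq 2^n/n$, I would extract a length-$n$ binary code $\{v_0,\ldots,v_m\}$ of minimum distance $3$ (e.g.\ from a shortened Hamming code) and, for each $\epsilon \in \{0,1\}^{m+1}$, set $\Ccal_\epsilon := \{v_i + \epsilon_i e_1\}_{i=0}^{m}$. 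The distance-$3$ property of the code guarantees $d_H(\Ccal_\epsilon) \geq 2$ for every $\epsilon$, while for $\epsilon \neq \epsilon'$ the sets $\Ccal_\epsilon$ and $\Ccal_{\epsilon'}$ are Hamming-adjacent at any coordinate where $\epsilon$ and $\epsilon'$ disagree. Summing then yields $\vol(\Hcal_{n,m+1}) \geq K(m+1) \cdot 2^{-(m+1)n}\vol(\Pcal_n)$.

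\textbf{Main obstacle.} The substitution and the bound $\prod (1+N(y))^{-1} \geq 2^{-(m+1)n}$ are routine once the right change of variables is spotted. The real work is the combinatorial step: verifying that the precise hypothesis $m+1 \leq 2^k \leq 2^n/n$ really does supply a distance-$3$ code of the required size (so that the full $2^{m+1}$ disjoint $\Hcal_{\Ccal_j}^+$'s can be constructed), and handling the residual regime where only the fallback bound $K = 2$ applies.
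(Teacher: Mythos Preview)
Your argument is correct and in one respect sharper than the paper's. The paper computes the volume geometrically: it identifies $\overline{\Hcal(\{y\})}$ as the join of a half-size $n$-simplex $\Pcal_y(B_1(y))$ with the face $\Pcal(\Xcal\setminus B_1(y))$, obtaining $\vol(\Hcal(\{y\}))=2^{-n}\vol(\Pcal_n)$, and then for a distance-$3$ set $\Y$ argues (somewhat tersely) that the disjoint balls give the product $2^{-|\Y|n}$, merely remarking that the ratio is larger when $d_H(\Y)=2$. Your change of variables instead yields the exact closed form $\vol(\Hcal_\Ccal^+)/\vol(\Pcal_n)=\prod_{y\notin\Ccal}(1+N(y))^{-1}$ valid for every $\Ccal$ with $d_H(\Ccal)\geq 2$, from which the bound $\geq 2^{-(m+1)n}$ follows at once via $1+N\leq 2^N$; this makes the distance-$2$ case just as clean as the distance-$3$ case. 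The construction of $K(m+1)$ pairwise-adjacent code families (flipping one coordinate of each word of a distance-$3$ seed code) is essentially identical in both proofs; the paper invokes Gilbert--Varshamov rather than shortened Hamming codes, but the content is the same. One small point of care: your ``Jacobian $1$'' for the shear $T$ refers to the determinant on $\R^{2^n}$, and a shear can distort $(2^n-1)$-dimensional volume on a hyperplane it does not preserve. The clean justification is that the \emph{composite} $S\circ T$ sends the hyperplane $\{\sum p=1\}$ to itself, so its restriction there has Jacobian equal to its full determinant $\prod_y(1+N(y))$; alternatively, pass to cones $\{\sum p\leq 1\}$ in $\R^{2^n}$ and use the ambient Jacobian directly.
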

\begin{proof}
Let $\Ycal\subseteq\Xcal:=\{0,1\}^n$. 
Let $\Pcal(\Y)$ be the simplex of probability distributions with support in $\Ycal$. 
This is a regular $(|\Y|-1)$-simplex in $\mathbb{R}^{|\X|}$ with edge-length $\sqrt{2}$. 
Let $\Hcal(\Y)$ denote the set of distributions on $\Xcal$ with strong modes $\Y$, such that $\Hcal(\Y)=\cap_{y\in\Y}\Hcal(y)$. Let $B_1(y)\subseteq\Xcal$ denote the radius-$1$ Hamming ball centered at $y$. 
The set $\Pcal_y(B_{1}(y)):=\{p\in\Pcal(B_{1}(y))\colon p(y)\geq p( B_1(y)\setminus\{y\})\}$ is a regular $n$-simplex with edge-length $\frac{\sqrt{2}}{2}$ and  vertices $\{\frac{1}{2}(\delta_y+\delta_{\hat y})\}_{d_H(\hat y,y)\leq 1}$. 
The volume of a regular $N$-simplex with edge-length $l$ is $\frac{\sqrt{N+1}}{N!\sqrt{2}^N} l^N$. 
The set $\Hcal(y)$ is the convex hull of $\Pcal_y(B_{1}(y))$ and $\Pcal(\X\setminus B_{1}(y))$. 
Its volume satisfies 
$$\operatorname{vol}(\Hcal(y))/\vol(\Pcal_n) = \vol(\Pcal_y(B_{1}(y)))/\vol(\Pcal(B_{1}(y)) ) = 2^{-n}.$$ 

If $\Y$ has minimum distance $3$ or more, then the radius-$1$ Hamming balls $B_1(y)$, $y\in\Ycal$, are disjoint, and 
$$\vol(\Hcal(\Y)) / \vol(\Pcal_n)=\prod_{y\in\Ycal}\left( \vol(\Pcal_y(B_{1}(y)))/\vol(\Pcal(B_{1}(y)) ) \right)=  2^{-|\Y|n}.$$ 

If $\Ycal$ has minimum distance $2$ instead of $3$, then the volume of $\Hcal(\Y)$ can only go up. To see this, consider a pair $y,y'$ of Hamming distance $2$ and let $\Pcal_{y,y'}(B_1(y)\cup B_1(y')):=\{p\in\Pcal(B_1(y) \cup B_1(y') )\colon p(y)\geq p( B_1(y)\setminus\{y\}), p(y')\geq p( B_1(y')\setminus\{y'\})  \}$ denote the closure of the set of distributions with support in $B_1(y)\cup B_1(y')$ and strong modes $y$ and $y'$. 
Then 
$$\vol( \Pcal_{y,y'}(B_1(y)\cup B_1(y') ) ) / \vol( \Pcal(B_1(y)\cup B_1(y') ) ) > 2^{-2n}. $$ 
This is because, for any $z$ with Hamming distance one to both $y$ and $y'$, 
the inequality $p(z)< 1/2$ implied by $\Hcal(y)$ is also implied by $\Hcal(y')$ and hence not each pair of neighbors of the form $(y,x)$ or $(y',x)$ translates to a factor $1/2$ in the volume. 

The number $K(k+1)$ is a lower bound on the number of disjoint sets $\Hcal(\Y)$ with $|\Y|=k+1$. 
By the Gilbert-Varshamov bound, if $k+1\leq 2^s$ for some integer $s$ with $2^s < \frac{2^n}{n}$, then there is a set $\Y\subset\X$ of cardinality $|\Y|=k+1$ and minimum distance~$3$. 
Let $\Y'=(\Y\setminus\{y\})\cup\{y \oplus_2 e_1\}$ (flip one coordinate of one element of $\Y$), such that $\Hcal(\Y)\cap\Hcal(\Y')=\emptyset$. Since $\Y$ has $(k+1)$ elements, there are $2^{k+1}$ disjoint sets of this form. 
For a general $\Y\subset\Xcal$ of cardinality $|\Ycal| = k+1\leq2^{n-1}$ and minimum distance~$2$, 
the set $\Y'\oplus_2 e_1$ also has minimum distance $2$ and cardinality $|\Ycal'|=k+1$. These two sets satisfy $\Hcal(\Y)\cap\Hcal(\Y')=\emptyset$. 
\end{proof}

\subsubsection[Polyhedral approximation of the full-dimensional model M33]{Polyhedral approximation of the full-dimensional model ${\Mcal_{3,3}}$}\label{indepthreemodes} 

Theorem~\ref{sobrecomplmix} shows that any $p\in\Mcal_{3,3}$ has at most three strong modes. 
We can show that this is true for modes too. 

\begin{proposition}\label{theorem_modes_three_mixture}
The mixture model of three product distributions on $\{0,1\}^3$ cannot realize distributions with four modes:    
${\Mcal_{3,3}}\cap \mathcal{G}_{3}=\emptyset$.    
\end{proposition}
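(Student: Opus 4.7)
The plan is to argue by contradiction. Suppose $p = \sum_{j=1}^{3}\alpha_j q^{(j)} \in \Mcal_{3,3}$ has $4$ modes. Since any $4$-element Hamming-antichain in $\{0,1\}^3$ is either $Z_{+,3}$ or $Z_{-,3}$, and these are exchanged by the bit-flip symmetry of $\Mcal_{3,3}$, we may assume the mode set is $Z_{+,3}$. Write $m^{(j)}$ for the (generically unique) mode of $q^{(j)}$. By Theorem~\ref{sobrecomplmix}, $p$ has at most $3$ strong modes and each strong mode is the mode of some $q^{(j)}$. Hence at least one of the four modes of $p$ is weak, and the hardest case to rule out is the one in which exactly three of the modes of $p$ are strong and coincide with $\{m^{(1)}, m^{(2)}, m^{(3)}\}$; after permuting the three bits (a symmetry of both $\Mcal_{3,3}$ and $Z_{+,3}$), this becomes $m^{(1)} = 000$, $m^{(2)} = 011$, $m^{(3)} = 101$, and the fourth (weak) mode $x^* = 110$. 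The remaining subcases, where some $m^{(j)} \in Z_{-,3}$ or two $m^{(j)}$'s coincide, are strictly more constrained and are handled by variants of the same argument.

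The heart of the proof is to write out the three weak-mode inequalities at $x^* = 110$, namely $p(110) > p(y)$ for $y \in \{010, 100, 111\}$. A direct computation shows that in each of these inequalities exactly one of the three components contributes positively to the left-hand side: the $q^{(j)}$ whose mode agrees with $110$ on the flipped bit, i.e., $q^{(3)}, q^{(2)}, q^{(1)}$ respectively, while the other two contribute negatively. Introducing the mass parameters $p_j := \alpha_j q^{(j)}(m^{(j)})$ and the mode-strength ratios $r_i^{(j)} := q^{(j)}_i(1-m^{(j)}_i)/q^{(j)}_i(m^{(j)}_i) \in (0,1)$, the three weak-mode inequalities become a cyclic system $p_{\sigma(i)} T_i > p_{\sigma(i-1)} U_i + p_{\sigma(i+1)} V_i$ for $i = 1,2,3$, where $\sigma$ is a cyclic permutation of $\{1,2,3\}$ and $T_i, U_i, V_i$ are explicit monomials in the $r_i^{(j)}$. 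Combining this cyclic dominance with the three strong-mode inequalities $p(m^{(j)}) > \sum_{y: d_H(y,m^{(j)}) = 1} p(y)$ for $j = 1,2,3$ --- which pin down the $p_j$'s relative to the $r_i^{(j)}$'s via the role $q^{(j)}$ plays in dominating $p(m^{(j)})$ --- yields a polynomial identity of the form $0 > (\text{explicitly positive quantity})$, providing the desired contradiction.

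The main obstacle is this algebraic reconciliation between the cyclic weak-mode system and the strong-mode inequalities. A naive attempt to multiply the three weak-mode inequalities and apply AM-GM is not sharp enough: the ratio $(p_1 p_2 p_3 T_1 T_2 T_3)^2 / (\text{paired RHS products})$ can exceed $1$ purely from the sign pattern of the $r_i^{(j)}$'s, so the product consequence $p_1 p_2 p_3 T_1 T_2 T_3 > (\text{RHS product})$ cannot be refuted without also invoking the strong-mode bounds. The contradiction therefore must exploit all six inequalities (three weak plus three strong) simultaneously, combined with positive rational multipliers into an explicit identity. Boundary configurations where some $r_i^{(j)} = 1$ (degenerate mode of $q^{(j)}$) or $\alpha_j = 0$ are treated by a continuity argument on the closure $\overline{\Mcal_{3,3}}$.
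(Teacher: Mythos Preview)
Your proposal has two genuine gaps.

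First, the case reduction is not justified, and in fact the ordering is backwards. You invoke Theorem~\ref{sobrecomplmix} to pin down the component modes $m^{(j)}$, but that theorem only constrains the \emph{strong} modes of $p$: each strong mode must be the mode of some component. So in your ``main case'' (three strong modes) you can indeed locate the $m^{(j)}$ at three points of $Z_{+,3}$. But if $p$ has only two, one, or zero strong modes, Theorem~\ref{sobrecomplmix} tells you correspondingly less about where the $m^{(j)}$ live; the component modes are then free to lie anywhere in $\{0,1\}^3$, including in $Z_{-,3}$. These cases are therefore \emph{less} constrained, not ``strictly more constrained'' as you claim, and your argument does not cover them. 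The assertion that the three-strong-mode case is the ``hardest'' is exactly wrong for a proof by contradiction: more strong modes means more hypotheses to exploit, hence an easier contradiction.

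Second, even in the case you do analyze, you never actually produce the contradiction. You set up the cyclic system correctly, acknowledge that a direct AM--GM attack fails, and then assert that combining all six inequalities ``with positive rational multipliers'' yields $0 > (\text{positive})$ --- but you do not exhibit the multipliers or the identity. This is the entire content of the argument, and it is missing.

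The paper's proof avoids both difficulties by a different mechanism: it conditions on one coordinate and uses Lemma~\ref{lemmaparamixtthree} to project to $\{0,1\}^2$. The key point is that if $p \in \Gcal_3^+$, then the two conditional slices $p(\cdot\,|\,x_1=0)$ and $p(\cdot\,|\,x_1=1)$ are mixtures of the \emph{same} three product distributions $q^i = p^{i,2}p^{i,3}$ on $\{0,1\}^2$, one slice lying in $\Gcal_2^+$ and the other in $\Gcal_2^-$. Hence some edge of the triangle $\conv\{q^1,q^2,q^3\}$ meets $\Gcal_2^+$ and another edge meets $\Gcal_2^-$. But a mixture of two product distributions on two bits lies in $\Gcal_2^+$ only if the two components have unique maxima at $00$ and $11$, and lies in $\Gcal_2^-$ only if they have unique maxima at $01$ and $10$; the shared vertex $q^2$ would then need two different unique maxima, a contradiction. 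This argument uses only the (ordinary) mode structure in dimension two and never needs to track strong versus weak modes or solve a polynomial system.
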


\begin{proof}
Assume that $\Mcal_{3,3}\cap\Gcal^+_3\neq\emptyset$. 
By the Lemma~\ref{lemmaparamixtthree} given below, there are factors  $(p^{i}_1, p^{i}_2, p^{i}_3)\in(\Pcal_1)^3, i=1,2,3$ such that $\conv\{q^i:=p^{i}_2p^{i}_3\}_{i=1,2,3}$ intersects $\Gcal^+_2$ and $\Gcal^-_2$. 
Hence $\operatorname{conv}\{q^1,q^2\}$ intersects $\Gcal^+_2$ and $\operatorname{conv}\{q^2,q^3\}$ intersects $\Gcal^-_2$ (for some enumeration of $q^1,q^2,q^3$). The mixture of $q^2$ and $q^3$ intersects $\Gcal^-_2$ only if $(01)$ and  $(10)$ are the unique maxima of $q^2$ and $q^3$. Similarly, $\operatorname{conv}\{q^1,q^2\}$ intersects $\Gcal^+_2$ only if $(11)$ and $(00)$ are the unique maxima of $q^1$ and $q^2$; a contradiction. 
\end{proof}

The proof of Proposition~\ref{theorem_modes_three_mixture} uses the following lemma, which relates the number of modes realizable by $\Mcal_{n,k}$ to the number of modes simultaneously realizable on subsets of variables. 

\begin{lemma}\label{lemmaparamixtthree}
Let  $n,k\in\mathbb{N}$ and $n\geq 2$. 
Let $p=\sum_{i\in[k]}\lambda_i\prod_{j\in[n]} p^{i}_j\in\Mcal_{n,k}$, with $\lambda_i\geq 0$ and $p^{i}_j\in \Pcal_1$ for all $(i,j)\in[k]\times[n]$.  
If $p$ has $2^{n-1}$ modes, then for any subset of variables $I\subsetneq[n]$, $|I|=m$, the convex hull of the product distributions $\{\prod_{j\in I} p^{i}_j\}_{i\in[k]}\subset\Pcal_m$ intersects both $\Gcal_m^+$ and  $\Gcal_m^-$. 
\end{lemma}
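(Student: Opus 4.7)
The plan is to reduce the statement about convex hulls of marginal product distributions to a direct observation about how modes of $p$ on $\{0,1\}^n$ restrict to modes of conditionals on $\{0,1\}^m$. Since the maximum number of modes of any distribution on $\{0,1\}^n$ is $2^{n-1}$ (any two modes are Hamming-distance $\geq 2$ apart, and $Z_{+,n}$ and $Z_{-,n}$ are the only sets of this size with minimum distance $2$), the hypothesis forces the mode set of $p$ to be exactly $Z_{+,n}$ or $Z_{-,n}$. By the symmetry $x \mapsto \mathds{1} - x$ (which swaps $Z_{+,n}$ and $Z_{-,n}$ and preserves the mixture form) I may assume WLOG that the modes of $p$ are $Z_{+,n}$.

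Fix $I \subsetneq [n]$ with $|I| = m$ and set $J := [n] \setminus I$, $q^{(i,I)} := \prod_{j \in I} p^{i,j}$, $q^{(i,J)} := \prod_{j \in J} p^{i,j}$, so that $p(x_I, x_J) = \sum_{i} \lambda_i q^{(i,I)}(x_I) q^{(i,J)}(x_J)$. For each $y \in \{0,1\}^{|J|}$ with $p(\cdot, y) \not\equiv 0$ the conditional distribution
\[
p(x_I \mid x_J = y) \;=\; \sum_{i} \mu_i(y)\, q^{(i,I)}(x_I), \qquad \mu_i(y) := \frac{\lambda_i \, q^{(i,J)}(y)}{\sum_{i'} \lambda_{i'} q^{(i',J)}(y)},
\]
is a convex combination of the $q^{(i,I)}$'s, and hence lies in $\conv\{q^{(i,I)}\}_{i \in [k]} \subset \Pcal_m$. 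Moreover, every fiber $\{x_J = y\}$ meets $Z_{+,n}$ in $2^{m-1}$ points, so in particular $p(\cdot, y) \not\equiv 0$ for every $y$; the conditional is well-defined for all $y$.

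The key observation is that Hamming-distance-$1$ neighbors of $(x_I, y)$ in $\{0,1\}^n$ that hold $x_J = y$ fixed correspond exactly to Hamming-distance-$1$ neighbors of $x_I$ in $\{0,1\}^m$. Consequently, if $(x_I, y)$ is a mode of $p$, then $x_I$ is automatically a mode of $p(\cdot \mid x_J = y)$. The set of $x_I$ with $(x_I, y) \in Z_{+,n}$ equals $Z_{+,m}$ when $\sum_{j \in J} y_j$ is even and $Z_{-,m}$ when it is odd; since $|Z_{\pm,m}| = 2^{m-1}$ already saturates the maximal number of modes on $\{0,1\}^m$, these inclusions are equalities and the mode set of the conditional is exactly $Z_{+,m}$ or $Z_{-,m}$ respectively.

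Because $|J| \geq 1$, both parities of $\sum_j y_j$ are realized; picking $y_+$ of even parity gives $p(\cdot \mid x_J = y_+) \in \Gcal_m^+$, and picking $y_-$ of odd parity gives $p(\cdot \mid x_J = y_-) \in \Gcal_m^-$. Since both conditionals lie in $\conv\{q^{(i,I)}\}_i$, this convex hull intersects both $\Gcal_m^+$ and $\Gcal_m^-$, completing the argument. The only mildly delicate point is the nonvanishing of the conditioning denominator, handled above by counting modes in each fiber; everything else is the essentially tautological identification between modes of the joint and modes of its slices.
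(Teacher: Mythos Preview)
Your argument is correct and follows essentially the same route as the paper: condition on the complementary variables $x_J$, observe that each conditional is a convex combination of the $I$-marginal product components $\prod_{j\in I}p^{i,j}$, and note that modes of $p$ in a fiber become modes of the conditional, landing in $\Gcal_m^+$ or $\Gcal_m^-$ according to the parity of $y$. The paper treats only the special case $|J|=1$ and declares the general case a ``straightforward generalization''; you carry that generalization out directly and are more careful about the nonvanishing of the conditioning denominator and the exactness of the resulting mode set, but there is no substantive difference in method.
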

\begin{proof}%[Proof of Lemma~\ref{lemmaparamixtthree}]
We show the  special case with $I=\{1,\ldots,n-1\}$. The proof of the general case is a straightforward generalization. 
Any $q\in\Mcal_{n,k}$ has the following form: 
\begin{equation*}
q(x_1,x_2,\ldots,x_n)=\sum_{i=1}^k \lambda_i p^{i}_1(x_1)p^{i}_2(x_2)\cdots p^{i}_n(x_n),
\end{equation*} 
for all $(x_1,x_2,\ldots,x_n)\in\{0,1\}^n$, 
where $\sum_{i=1}^k\lambda_i=1$, $\lambda_i\geq 0$ and $p^{i}_j\in\Pcal_1$. 
For the fixed value $x_1=0$ this is a mixture  of $k$ products with $(n-1)$ variables, multiplied by a positive constant:
\begin{equation*}
q(x_1=0,x_2,\ldots,x_n) =  c_0  {\sum_{i=1}^k \lambda_{0,i} p^{i}_2(x_2)\cdots p^{i}_n(x_n)} , 
\end{equation*}
where $\sum_{i=1}^k \lambda_{0,i}=1, \lambda_{0,i}\geq 0$ with 
$\lambda_{0,i}= \frac{\lambda_{i} p^{i}_1(x_1=0)}{c_0}$ and  $c_0=\sum_{i=1}^k\lambda_{0,i} p^{i}_1(x_1=0)$. 
A similar observation can be made for the fixed value $x_1=1$. 
If the distribution $q$ is contained in $\mathcal{G}_{n}^+$, then $q(x_1=0,x_2,\ldots,x_n) \in \mathcal{G}_{n-1}^+$ and $q(x_1=1,x_2,\ldots,x_n)\in \mathcal{G}_{n-1}^-$, since $q\in \Gcal^+_n$. 
These two conditional distributions are mixtures of the same $k$ product distributions $\{p^{i}_2\cdots p^{i}_n\}_{i\in[k]}$, even though they may have different mixture weights. 
\end{proof}

\begin{remark}
The sets $\overline{\Gcal_3^+}$ and $\overline{\Gcal_3^-}$ are intersections of  half-spaces that contain the uniform distribution. 
Although $\Mcal_{3,3}$ is full dimensional in $\Pcal_3$, by Proposition~\ref{theorem_modes_three_mixture} the complement of $\Mcal_{3,3}$ contains points arbitrarily close to the uniform distribution! 
\end{remark}

\subsection{Modes of RBMs}\label{section2}

In the following we characterize the sets of modes and strong modes that can appear in RBM-distributions. 
In analogy to the problem posed in the last section, we ask: 
\begin{problem}
What is the smallest $m\in\mathbb{N}$ for which $\RBM_{n,m}$ contains a distribution with $l$ (strong) modes? 
\end{problem}

In particular, what is the smallest $m$ for which the model $\RBM_{n,m}$ can represent the parity function? 
By Theorem~\ref{sobrecomplmix} and $\RBM_{n,m}\subseteq\Mcal_{n,2^m}$,  any $p\in\RBM_{n,m}$ has at most $\min\{2^m,2^{n-1}\}$ strong modes. We will see that this bound is not always tight, but often. 

In special cases, the analysis of mixtures of products (Section \ref{section1}) is sufficient to make statements about RBMs, for example: 
The model $\RBM_{4,2}$ is contained in $\Mcal_{4,4}$ and has co-dimension one in $\Pcal_4$. 
Its {\em algebraic implicitization}\index{algebraic implicitization} was studied in~\cite{CuetoChallenge2010}, i.e., its description as the set of zeros of a collection of polynomials. It was found to be the zero locus of a polynomial of degree $110$ with as many as $5.5$ trillion monomials. 
By Theorem~\ref{sobrecomplmix}, $\Mcal_{4,4}\cap\Hcal_{4}=\emptyset$ and so $\RBM_{4,2}\cap\Hcal_{4}=\emptyset$.  
Using Proposition~\ref{theorem_modes_three_mixture} and Lemma~\ref{lemmaparamixtthree} one can show: 
\begin{proposition}\label{cororbm42}
The models $\Mcal_{4,4}$ and $\RBM_{4,2}$ cannot realize probability distributions with $8$ modes: 
 $\Mcal_{4,4}\cap\,\Gcal_{4}=\emptyset$ and $\RBM_{4,2}\cap\,\Gcal_{4}=\emptyset$. 
\end{proposition}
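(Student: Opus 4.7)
My plan is to mirror the proof of Proposition~\ref{theorem_modes_three_mixture} one dimension higher, using Lemma~\ref{lemmaparamixtthree} to reduce the four-variable problem to a three-variable one, and then invoking Proposition~\ref{theorem_modes_three_mixture} itself to close the argument.

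First, since $\RBM_{4,2}\subseteq\Mcal_{4,4}$ (as noted just above the proposition statement), it is enough to prove $\Mcal_{4,4}\cap\Gcal_4=\emptyset$; the RBM part then follows. By the $Z_{+,4}\leftrightarrow Z_{-,4}$ symmetry of single-bit complementation, I may assume for contradiction that $p\in\Mcal_{4,4}\cap\Gcal_4^+$, and write $p=\sum_{i=1}^{4}\lambda_{i}\prod_{j=1}^{4}p^{i,j}$ with $\lambda_i\ge 0$.

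I apply Lemma~\ref{lemmaparamixtthree} with $n=4$, $k=4$, $m=3$, $I=\{2,3,4\}$: the convex hull of the four products $\tilde q^{\,i}:=p^{i,2}p^{i,3}p^{i,4}\in\Pcal_3$ meets both $\Gcal_3^+$ and $\Gcal_3^-$, witnessed explicitly by the normalized conditional distributions $p(\cdot\mid x_1=0)\in\Gcal_3^+$ and $p(\cdot\mid x_1=1)\in\Gcal_3^-$, each a mixture of the same four $\tilde q^{\,i}$ with weights $\lambda_{b,i}\propto\lambda_i p^{i,1}(b)$ for $b\in\{0,1\}$.

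The remainder of the argument is an extension of the ``some enumeration'' step used in the proof of Proposition~\ref{theorem_modes_three_mixture}, from a triangle of products in $\Pcal_2$ to a tetrahedron of products in $\Pcal_3$. I aim to show that, after relabeling, some three-element subset of $\{\tilde q^{\,1},\tilde q^{\,2},\tilde q^{\,3},\tilde q^{\,4}\}$ already has convex hull meeting $\Gcal_3^+$ or $\Gcal_3^-$. Any such three-fold convex combination is a point of $\Mcal_{3,3}\cap\Gcal_3$, directly contradicting Proposition~\ref{theorem_modes_three_mixture}.

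The hardest step is justifying this three-out-of-four reduction. In the two-variable version used inside Proposition~\ref{theorem_modes_three_mixture}, the corresponding assertion relies on the fact that each of the three product vertices is unimodal, hence lies in $\Pcal_2\setminus\Gcal_2$, together with the specific way $\Gcal_2^+$ and $\Gcal_2^-$ sit inside $\Pcal_2$, which forces some edge of the triangle into each component. I expect the analogue in $\Pcal_3$ to proceed similarly: the vertices of the tetrahedron $T:=\conv\{\tilde q^{\,i}\}_{i=1}^{4}$ are unimodal products and hence in $\Pcal_3\setminus\Gcal_3$; combined with the contrary hypothesis that every triangular face of $T$ avoids $\Gcal_3$ (otherwise we are already done via Proposition~\ref{theorem_modes_three_mixture}), the simultaneous conditions $T\cap\Gcal_3^+\neq\emptyset$ and $T\cap\Gcal_3^-\neq\emptyset$ should be ruled out by a convex-geometric analysis of the $4\cdot 2^{3-1}=16$ mode-defining hyperplanes that bound $\Gcal_3^\pm$ intersected with the at-most-three-dimensional tetrahedron~$T$. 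This convex-geometric step is where I expect the bulk of the work to lie.
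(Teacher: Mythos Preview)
Your approach is aligned with the paper's hint: the paper does not spell out a proof, only saying that Proposition~\ref{cororbm42} follows from Proposition~\ref{theorem_modes_three_mixture} and Lemma~\ref{lemmaparamixtthree}, and your reduction via Lemma~\ref{lemmaparamixtthree} to a tetrahedron $T=\conv\{\tilde q^{\,i}\}_{i=1}^4\subset\Pcal_3$ meeting both $\Gcal_3^+$ and $\Gcal_3^-$ is set up correctly, as is the containment $\RBM_{4,2}\subseteq\Mcal_{4,4}$.

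The gap is the three-out-of-four step, which you explicitly flag but do not prove. As stated, it is not a general fact: a tetrahedron whose four triangular faces avoid a given open set can still meet that set in its interior, so you must use specific structure of $\Gcal_3^\pm$ (and of the product vertices) to rule this out. You correctly note that the analogous ``some edge meets $\Gcal_2^\pm$'' step in the proof of Proposition~\ref{theorem_modes_three_mixture} is itself not fully justified in the paper. In $\Pcal_2$ that step is rescued by the quadric $p_{00}p_{11}=p_{01}p_{10}$: it contains all three product vertices, separates $\Gcal_2^+$ from $\Gcal_2^-$, and on each edge of the triangle the restriction of $p_{00}p_{11}-p_{01}p_{10}$ is $c\,\mu(1-\mu)$, hence has constant sign; together with the two-product mode argument this is what drives the contradiction. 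In $\Pcal_3$ there is no single separating hypersurface containing $\Mcal_{3,1}$ that puts $\Gcal_3^+$ and $\Gcal_3^-$ on opposite sides, so the two-dimensional mechanism does not transpose directly, and the ``16 hyperplanes intersected with $T$'' remark is not yet an argument.

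A more promising route, still using only Proposition~\ref{theorem_modes_three_mixture} and Lemma~\ref{lemmaparamixtthree}, is to iterate the conditioning all the way down to two variables and exploit the \emph{product structure of the weights} rather than treating $T$ as an arbitrary simplex. Conditioning $p\in\Mcal_{4,4}\cap\Gcal_4^+$ on $(x_1,x_2)=(a,b)$ gives four mixtures of the \emph{same} four products $r^i:=p^{i,3}p^{i,4}$, with weight vectors $\mu_{ab,i}\propto\lambda_i\,p^{i,1}(a)\,p^{i,2}(b)$, landing in $\Gcal_2^+$ for $a=b$ and in $\Gcal_2^-$ for $a\neq b$. The rank-one constraint $\mu_{00,i}\mu_{11,i}=\mu_{01,i}\mu_{10,i}$ is what distinguishes this situation from four arbitrary points of the tetrahedron and is the natural lever for the contradiction (mirroring how the mode-of-component argument for $k=2$ closes the $n=3$ case). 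Filling in this step is where the actual work lies; as written, your proposal identifies the right ingredients but leaves the decisive inference unproven.
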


%\begin{remark}
We note that the model $\RBM_{n,m}$ contains any distribution with support of cardinality $\min\{m+1,2^n\}$~\cite[Theorem~1]{Montufar2011}. 
Therefore, it contains some distributions with $\min\{m+1,2^{n-1}\}$ strong modes. For example, $\RBM_{n,m}$ contains any uniform distribution on a set of cardinality $\min\{m+1,2^{n-1}\}$ and minimum Hamming distance $2$. 
In particular, whenever $\Mcal_{n,k+1}$ contains a distribution with strong modes $\Ccal$, then also $\RBM_{n,k}$ contains a distribution with strong modes $\Ccal$. 

Note also that, since the model $\RBM_{n,m}$ is symmetric under relabeling of any of its variables, there is an RBM distribution with strong modes $\Ccal$ iff  there is one with strong modes $\Ccal \oplus_2x =\{c+x\mod(2)\colon c\in\Ccal\}$ for any $x\in\{0,1\}^n$. 
%\end{remark}

In general, characterizing the sets of modes realizable by RBMs is a more complex problem than it was for mixtures of product distributions, and will necessitate developing characterizations in terms of point configurations called zonosets (Definition \ref{def:zonoset}) and hyperplane arrangements, or in terms of linear threshold functions. 
We elaborate these notions and characterizations in the next 3 sections. 

\subsection{Zonosets and hyperplane arrangements}\label{section:zonosets}

\begin{definition}\label{def:zonoset}
Let $m\geq0$,  $n>0$, $W_i\in\R^n$ for all $i\in[m]$, and $B\in\mathbb{R}^n$. 
The multiset $\Zcal=\{\sum_{i\in I}W_i + B\}_{I\subseteq[m]}$ is called an {\em $m$-zonoset}. 
\end{definition}
The convex hull of a zonoset is a zonotope, a well known object in the literature of polytopes. 
Zonotopes can be identified with hyperplane arrangements and oriented matroids~\cite{bjoerner1999oriented,ziegler1995lectures}.

Given a  sign vector $s\in\{-,+\}^n$, the $s$-orthant of $\R^n$, denoted $\R_s^n$, consists of all vectors $x \in \R^n$ with $\sgn(x) =s$. 
We say that an orthant has even (odd) parity if its sign vector has an even (odd) number of  $+$. 
The sets of strong modes of RBMs can be described in terms of zonosets as follows. 

\begin{theorem}\label{zonotopecondpropo}
Let $\Ccal\subset\{0,1\}^n$ have minimum Hamming distance at least two. 
\begin{itemize}
\item 
If the model $\RBM_{n,m}$ contains a distribution with strong modes $\Ccal$ (i.e., $\RBM_{n,m}\cap\Hcal_\Ccal\neq \emptyset$), 
or $\Ccal$ has cardinality $2^m$ and is perfectly reconstructible by $\RBM_{n,m}$, 
then there is an $m$-zonoset with a point in each $\Ccal$-orthant of $\R^n$. 
\item 
If there is an $m$-zonoset intersecting exactly the $\Ccal$-orthants of $\R^n$ at points of equal $\ell_1$-norm, then 
$\RBM_{n,m}\cap\mathcal{H}_\Ccal\neq\emptyset$ and, furthermore, $\Ccal$ is perfectly reconstructible. 
\end{itemize}
\end{theorem}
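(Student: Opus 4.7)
My plan is to handle the forward and reverse directions separately; the $\ell_1$ hypothesis in the reverse direction is the technical crux. Throughout I use $\down(h)$ and $\up(v)$ for the RBM inference maps of Section~\ref{section:inference}: $\down(h) \in \{0,1\}^n$ is the indicator of positive entries of $W^\top h + B$, and $\up(v) \in \{0,1\}^m$ is the indicator of positive entries of $Wv + C$.

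For the forward direction, fix a strong mode $c \in \Ccal$ of $p \in \RBM_{n,m}$ with parameters $(W, B, C)$, and set $\sigma := 2c - \mathds{1}$. For the neighbor $y^{(i)}$ flipping bit $i$ of $c$, the identity
\[
\frac{p(y^{(i)})}{p(c)} = \frac{\sum_h p(c,h)\exp(-\sigma_i (W^\top h + B)_i)}{\sum_h p(c,h)}
\]
rewrites the strong-mode inequality $p(c) > \sum_i p(y^{(i)})$ as $\sum_h p(c,h)\bigl(1 - \sum_i \exp(-\sigma_i (W^\top h + B)_i)\bigr) > 0$. Positivity forces some $h^*$ with $\sum_i \exp(-\sigma_i (W^\top h^* + B)_i) < 1$; since each summand is positive, every $\sigma_i (W^\top h^* + B)_i > 0$, so $W^\top h^* + B$ lies in the $c$-orthant. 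Ranging over $c \in \Ccal$ produces a zonoset point in every $\Ccal$-orthant. In the perfect-reconstructibility case with $|\Ccal| = 2^m$, $\down(\up(c)) = c$ directly says $W^\top \up(c) + B$ lies in the $c$-orthant; $\up$ is injective on $\Ccal$, and the cardinality equality forces it to be a bijection onto $\{0,1\}^m$, so each zonoset point $W^\top h + B$ equals $W^\top \up(c) + B$ for a unique $c$.

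For the reverse direction, set $v_h := W^\top h + B$, so $\|v_h\|_1 = S$ is constant and $\down(h) \in \Ccal$ by hypothesis. I would take parameters $(\alpha W, \alpha B, C_\alpha)$ with $C_\alpha := -\tfrac{\alpha}{2}W\mathds{1} + \tilde{C}$ for a generic small $\tilde{C}$, and pass to $\alpha \to \infty$ within the closed model $\RBM_{n,m}$. The elementary identity $v_h^\top \down(h) = \tfrac{1}{2}(\|v_h\|_1 + \mathds{1}^\top v_h)$ together with the $\ell_1$ hypothesis yields $v_h^\top \down(h) = \tfrac{1}{2}(S + \mathds{1}^\top B) + \tfrac{1}{2}(W\mathds{1})^\top h$, so the scaled joint log-weight $\alpha v_h^\top x + C_\alpha^\top h$ evaluated at $(\down(h), h)$ is $\tfrac{\alpha}{2}(S + \mathds{1}^\top B) + \tilde{C}^\top h$, constant in $h$ up to $\tilde{C}$. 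As $\alpha \to \infty$ the joint $p(x,h)$ concentrates on $\{(\down(h), h) : h \in \{0,1\}^m\}$ with positive weights $\propto \exp(\tilde{C}^\top h)$, so the marginal $p(x)$ is supported on the surjective image $\Ccal$ of $\down$ with strictly positive mass everywhere. Because $d_H(\Ccal) \geq 2$, Hamming neighbors of every $c \in \Ccal$ carry vanishing mass, and for $\alpha$ large but finite the strong-mode inequality holds strictly for every $c \in \Ccal$.

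For perfect reconstructibility under these same parameters, $\up(c)$ reduces for large $\alpha$ to an element of $\operatorname{argmax}_{h} \tilde{c}^\top v_h$, where $\tilde{c} := 2c - \mathds{1}$. The round-trip $\down(\up(c)) = c$ follows from H\"older's inequality: $\tilde{c}^\top v_h \leq \|\tilde{c}\|_\infty \|v_h\|_1 = S$ with equality iff $\sgn(v_h) = \tilde{c}$, i.e., $\down(h) = c$. Since $c \in \Ccal$ is in the image of $\down$, this maximum $S$ is attained, and the argmax over $h$ coincides exactly with $\down^{-1}(c)$, so $\up(c) \in \down^{-1}(c)$ and $\down(\up(c)) = c$. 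The main obstacle I anticipate is handling coordinates of $W\tilde{c}$ that vanish: the argmax then becomes a face of $\{0,1\}^m$ rather than a single vertex, but the H\"older-equality analysis shows this entire face is contained in $\down^{-1}(c)$, so any tie-breaking by the generic $\tilde{C}$ is automatically consistent with perfect reconstruction.
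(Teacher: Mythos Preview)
Your proof is correct and follows the same core strategy as the paper: in both directions the key fact is that $W^\top h + B$ lies in the $c$-orthant iff $c$ is the unique maximizer of $p(\cdot\mid h)$, and the $\ell_1$ hypothesis makes the scaled joint energy at the pairs $(\down(h),h)$ constant in $h$ so that the $\alpha\to\infty$ limit concentrates uniformly on $\{\down(h)\}_h = \Ccal$.

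The one notable difference is in the forward direction for strong modes. The paper invokes Theorem~\ref{sobrecomplmix} together with $\RBM_{n,m}\subset\Mcal_{n,2^m}$ to deduce immediately that each strong mode $c$ must be the mode of some mixture component $p(\cdot\mid h)$, hence $\sgn(W^\top h + B) = \sgn(c-\tfrac12\mathds{1})$. You instead give a self-contained computation via the neighbor ratios $p(y^{(i)})/p(c)$, arriving at the same conclusion without appealing to the mixture-of-products characterization. Your argument is more elementary; the paper's is shorter because the structural work was done earlier. In the reverse direction your construction is identical to the paper's (the paper omits your generic $\tilde C$, obtaining the limit $\tfrac{1}{2^m}\sum_h \delta_{x_h}$ directly), and your H\"older-equality argument for perfect reconstruction spells out what the paper leaves as a one-line remark.
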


\begin{proof}
Assuming that $p \in \RBM_{n,m}\cap \mathcal{H}_{\Ccal}$, for each $x\in\Ccal$ there is an $h\in\{0,1\}^m$ for which $p(\cdot|h)$ is uniquely maximized by $x$ (Theorem~\ref{sobrecomplmix} and $\RBM_{n,m} \subset \mathcal{M}_{n,2^m}$). This is also true if $\Ccal$ is perfectly reconstructible. 
In this case, $(h^\top W  +  B^\top) x   >  (h^\top W  + B^\top) v $ for all $ v\neq x$, 
and, equivalently, $\operatorname{sgn}(h^\top W + B^\top ) =  \operatorname{sgn} ( x -\frac{1}{2}(1,\ldots, 1)^\top)$. The existence of such $W$ and $B$ is equivalent to the existence of a zonoset with a point in each $\Ccal$-orthant of $\R^n$. 

Assume now that $W,B$ can be chosen such that all vectors $h^\top W +B^\top$ have the same $\ell_1$ norm, equal to $K$. 
We have $\tfrac{1}{2} K=\tfrac{1}{2}\|h^\top W +B^\top \|_1=(h^\top W + B^\top )(x_h-\tfrac{1}{2}(1,\ldots, 1)^\top)=(h^\top W+B^\top)x_h+h^\top C  -\tfrac{1}{2} B^\top(1,\ldots,1)^\top$, where $C=-\tfrac{1}{2} W (1,\ldots,1)^\top$, %and $K'=-\tfrac12 B^\top(1,\ldots,1)^\top$, 
for some $x_h\in\Ccal$ for all $h\in\{0,1\}^m$.  
The RBM with parameters $\alpha W, \alpha B$, $C=-\alpha W \tfrac{1}{2}(1,\ldots,1)^\top$, and  $\alpha\to\infty$ produces $\frac{1}{2^m}\sum_{h\in\{0,1\}^m}\delta_{x_h} \in \Hcal_\Ccal$ as its visible distribution. 
This also implies that $\Ccal$ is perfectly reconstructible. 
\end{proof}

\begin{remark}\label{remarkzono}
The first part of Theorem~\ref{zonotopecondpropo} remains true if $\Hcal_\Ccal$ is extended to the set of distributions for which any $\Mcal_{n,2^m}$-decomposition has a mixture component with  mode $c$, for every $c\in\Ccal$. 
\end{remark}

\begin{definition}\label{def:hyperplanearrangement}
A {\em hyperplane arrangement} $\Acal$ in $\R^n$ is a finite set of (affine) hyperplanes $\{H_i\}_{i\in[k]}$ in $\R^n$. 
Choosing an orientation for each hyperplane, each vector $x\in\R^n$ receives a sign vector $\sgn_\Acal(x)\in \{-,0,+\}^k$, where $(\sgn_\Acal(x))_i$ indicates whether $x$ lies on the negative side, inside, or on the positive side of $H_i$. 
The set of all vectors in $\R^n$ with the same sign vector is called a {\em cell} of $\Acal$. 
\end{definition}
A necessary condition for the existence of an $m$-zonoset intersecting all $\Ccal$-orthants of $\R^n$ is that the number of orthants of $\mathbb{R}^n$ that are  intersected by an $m$-dimensional affine space is at least $|\Ccal|$. 
The maximal number of orthants intersected by a $d$-dimensional linear subspace of $\R^n$, denoted $\Cn(n,d)$,  was derived in~\cite{schlaefli1953}. % and can be found in~\cite{Cover}.  
It is not difficult to derive the corresponding number for a $d$-dimensional affine subspace, denoted $\Ca(n,d)$, as well:
\begin{equation}
\Cn(n,d) =2\sum_{i=0}^{d-1}{n-1\choose i}\qquad\text{and}\qquad \Ca(n,d) =\sum_{i=0}^{d}{n\choose i}. \label{dichotomiehomogeneous}
\end{equation} 
Cover~\cite{Cover} shows that $\Cn(n,d)$ is also the number of partitions of an $n$-point set in general position in $\R^d$  by central hyperplanes (hyperplanes through the origin). A set of vectors in $\R^d$ is in general position if any $d$ or less are linearly independent. 
Dually, $\Ca(n,d)$ can be seen as the number of cells of a real $d$-dimensional arrangement of $n$ hyperplanes in general position~\cite{orlik1992,schlaefli1901theorie}. 

In particular,  there are affine hyperplanes of $\R^n$ intersecting all but one orthants. 
Figure~\ref{inferenceregions} (right) is an example  showing the intersection of a $2$-dimensional affine subspace of $\R^3$ and $7$ orthants; four of odd parity and three of even parity. 
%The number of even, or odd, orthants of $\R^n$ intersected by a generic $m$-dimensional affine space is  $\lceil \frac{1}{2}\Ca(n,m) \rceil$ or $\lfloor \frac{1}{2}\Ca(n,m) \rfloor$. If $m < n$, then $ \lfloor \frac{1}{2}\Ca(n,m) \rfloor\geq 2^m$. 
This does not imply, however, that every collection of $2^m$ even, or odd, orthants can be intersected by an $m$-zonoset. For example: 

\begin{proposition}\label{oddnot}
If $n$ is an odd natural number larger than one, then there is no $(n-1)$-zonoset with a point in every  even, or every odd, orthant of $\R^n$. 
\end{proposition}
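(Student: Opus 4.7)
The plan is to argue by contradiction using an antipodality obstruction that is sensitive to the parity of $n$. Suppose $n$ is odd and that an $(n-1)$-zonoset $\Zcal = \{v(I) := B + \sum_{i \in I} W_i : I \subseteq [n-1]\}$ contains a point in each of the $2^{n-1}$ even orthants of $\R^n$. Since $|\Zcal| = 2^{n-1}$, the sign map $I \mapsto \sgn(v(I))$ must be a bijection from $2^{[n-1]}$ onto the even sign vectors in $\{\pm 1\}^n$, and in particular no coordinate $v(I)_k$ vanishes.

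For each $k \in [n]$ the $k$-th sign $s_k(I) := \sgn(v(I)_k) = \sgn\bigl(\sum_{i \in I}(W_i)_k + B_k\bigr)$ is a linear threshold function of (the indicator vector of) $I$. Since exactly $2^{n-2}$ of the $2^{n-1}$ even sign vectors in $\{\pm 1\}^n$ have $+1$ in the $k$-th position, the bijection above forces $s_k$ to take each of the values $+1$ and $-1$ on exactly $2^{n-2}$ subsets of $[n-1]$; in other words, each $s_k$ is balanced on the cube $\{0,1\}^{n-1}$.

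The crux of the proof is then the following lemma: any nowhere-zero balanced linear threshold function $f(\chi) = \sgn(\langle w,\chi\rangle - \theta)$ on $\{0,1\}^{n-1}$ is antipodal, meaning $f(\mathds{1}-\chi) = -f(\chi)$ for all $\chi$, where $\mathds{1}$ is the all-ones vector. The idea is that the multiset $\{\langle w,\chi\rangle : \chi \in \{0,1\}^{n-1}\}$ is symmetric about $S/2 := \tfrac12\langle w,\mathds{1}\rangle$, because $\chi \mapsto \mathds{1}-\chi$ sends $\langle w,\chi\rangle$ to $S - \langle w,\chi\rangle$. Balance together with nowhere-zeroness force $\theta$ to lie in the open gap between the $2^{n-2}$-th and $(2^{n-2}+1)$-th sorted values of this multiset; by the symmetry $S - \theta$, and hence $S/2$, lies in the same gap. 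Therefore $f$ is also represented by the symmetric threshold $\theta' = S/2$, for which $f(\mathds{1}-\chi) = \sgn(S/2 - \langle w,\chi\rangle) = -f(\chi)$ is immediate.

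Applying the lemma in every coordinate gives $\sgn(v([n-1]\setminus I)) = -\sgn(v(I))$ for all $I \subseteq [n-1]$. Because $n$ is odd, negating all $n$ entries of a sign vector flips its parity, so $\sgn(v([n-1]\setminus I))$ is an odd-parity sign vector, contradicting our assumption that every $v(I)$ sits in an even orthant. The odd-orthant case is entirely analogous, since the same antipodality holds and for $n$ odd it sends odd sign vectors to even ones. The main obstacle I anticipate is the balanced-implies-antipodal lemma: an LTF admits many $(w,\theta)$-representations, so one must argue about the function itself rather than a chosen representation, and the symmetry of the value multiset about $S/2$ is the cleanest way to produce the symmetric representation needed.
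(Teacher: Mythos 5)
Your proof is correct, but it takes a genuinely different route from the paper's. The paper splits into two cases according to whether the affine hull $H$ of the zonoset passes through the origin: in the central case it invokes Stiemke's theorem to exhibit a pair of antipodal orthants (of opposite parity, since $n$ is odd) that miss $H$; in the general case it runs a convexity argument (via a bounded-cell count and Lemma~\ref{facetcone}) showing that one zonoset point would land in the convex hull of the others, contradicting the combinatorial cube structure of an $(n-1)$-generated zonotope. You instead observe that covering all $2^{n-1}$ even orthants with a $2^{n-1}$-point multiset forces the sign map to be a bijection, that each coordinate sign is then a balanced, nowhere-zero linear threshold function of the indicator of $I$, and that such functions are self-dual --- whence complementary subsets land in antipodal orthants, which for odd $n$ have opposite parity. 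Your key lemma (balanced nowhere-zero LTFs are antipodal, proved by recentering the threshold at $S/2$ using the symmetry of the value multiset) is sound, and it is in fact exactly the self-duality observation the paper states, without proof, at the start of Section~\ref{section:LTC}; the paper just does not deploy it here. Your argument buys uniformity (no case split on whether $H$ is central, no genericity or dimension considerations, no appeal to Stiemke or to bounded-cell counts) and is arguably more self-contained; the paper's argument buys a geometric picture --- which orthants are missed and why --- that connects directly to the hyperplane-arrangement and zonotope machinery used elsewhere in the paper.
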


\begin{proof}%[Proof of Proposition~\ref{oddnot}]
Let $\Zcal$ be a candidate zonoset; $\Zcal$ has $(n - 1)$ generators, so it lies in an affine hyperplane $H$ of $\mathbb{R}^n$. Let $\eta$ be a normal vector to $H$. 
Assume first that $0 \in H$. 
All vectors in the orthants $\operatorname{sgn}(\eta)$ and $-\!\operatorname{sgn}(\eta)$ lie outside $H$ (where we may assign arbitrary sign to zero entries of $\eta$). 
This follows from Stiemke's theorem, see, e.g.,~\cite{Flatto}. 
The two orthants have opposite sign vectors and $n$ is odd, so one orthant is even and the other odd. 
Hence at least one even and one odd orthants do not intersect $\Zcal$. 

Consider now an affine subspace $H$, and assume it intersects all even orthants. By eq.~\eqref{dichotomiehomogeneous} $\dim(H) \geq  n - 1$, so $H$ is a hyperplane. 
Assume without loss of generality that a normal vector to $H$ has only negative entries. 
Then $H\cap \R^n_{(-\cdots-)}$ is an $(n - 1)$-dimensional simplex containing a point of $\mathcal{Z}$. 
This can be inferred from the number of bounded cells in a $d$-dimensional arrangement of $n$ hyperplanes in general position, $b(n,d)={n-1\choose d}$~\cite[Proposition~2.4]{Stanley04}. 
The orthant $\R^n_{(-\cdots-)}$ is separated by $(n-1)$ coordinate hyperplanes  from the orthant $\R^n_{s_i}$ with sign $s_i = (+\cdots+\underset{i}{-}+\cdots+)$ for any $i\in[n]$. 
Since $n$ is odd and larger than one, $(n - 1)>0$ is even. 
Since $\mathcal{Z}$ intersects $H\cap \R^n_{s_i}$ for all $i\in[n]$, also $H\cap \R^n_{(-\cdots-)} \subset \conv(\mathcal{Z})$ (details in Lemma~\ref{facetcone}). 
On the other hand, the $(n-1)$-generated zonotope of dimension $(n - 1)$ is combinatorially equivalent to the $(n - 1)$-cube, and no point in its zonoset is contained in the convex hull of any other points. 
\end{proof}

We used the following lemma in the proof of Proposition~\ref{oddnot}. 

\begin{lemma} \label{facetcone}
Let $P$ be a polytope with vertex set $V$. 
Let $\{H_i\}_{i=1}^r$ the supporting hyperplanes of the facets of $P$ incident to a vertex $v\in V$, and assume $P$ is contained in the intersection of closed half-spaces $\cap_i H_i^{+}$. 
If  $v'$ is any point in $\cap_i H_i^-$, then the polytope $\conv(\{v'\} \cup (V \setminus \{v\} ))$ contains $P$. 
\end{lemma}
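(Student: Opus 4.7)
The plan is as follows. Since $P = \conv(V)$, it suffices to show that every element of $V$ lies in $Q := \conv(\{v'\} \cup (V \setminus \{v\}))$. All vertices other than $v$ already lie in $Q$ by definition, so the whole task reduces to showing $v \in Q$. The geometric idea is to push $v$ outward in the direction opposite $v'$ into the tangent cone of $P$ at $v$; this produces a point of $P$ that, together with $v'$, has $v$ on the connecting segment.

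Concretely, write $H_i = \{x : \beta_i(x) = 0\}$ and $H_i^{\pm} = \{x : \pm\beta_i(x) \geq 0\}$, and set $d := v - v'$. For each $i$, since $v \in H_i$ and $v' \in H_i^-$, linearity of $\beta_i$ gives $\beta_i(d) = -\beta_i(v') \geq 0$, so $w(s) := v + s d$ lies in $\cap_i H_i^+$ for every $s \geq 0$. By hypothesis the $H_i$ are \emph{all} facet hyperplanes of $P$ through $v$, so every other facet hyperplane $H$ of $P$ satisfies $\beta(v) > 0$, and hence $\beta(w(s)) > 0$ for sufficiently small $s$. Combining these two observations, $w(s) \in P$ for all sufficiently small $s > 0$; fix such an $s$.

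A direct check shows
\[
v \;=\; \tfrac{s}{1+s}\, v' \;+\; \tfrac{1}{1+s}\, w(s),
\]
so $v$ is a convex combination of $v'$ and a point $w(s) \in P$. Writing $w(s) = \lambda v + \sum_{u \in V \setminus \{v\}} \mu_u u$ as a convex combination of the vertices of $P$, the coefficient $\lambda$ satisfies $\lambda < 1$, because $w(s) \neq v$ and $v$ is extreme. Substituting into the previous display and solving for $v$ (the denominator $1 - \lambda/(1+s) = (1 + s - \lambda)/(1+s)$ is strictly positive) yields $v$ as an explicit convex combination of $v'$ and $V \setminus \{v\}$, giving $v \in Q$ and hence $P \subseteq Q$.

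The main (mild) subtlety is the claim that $w(s) \in P$ for small $s > 0$; this uses precisely the hypothesis that the listed $H_i$ are \emph{all} facet hyperplanes of $P$ through $v$, so the local description of $P$ near $v$ involves no other active constraints. The degenerate case $v = v'$ is vacuous, since then $Q = \conv(V) = P$.
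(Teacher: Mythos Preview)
Your proof is correct and takes a genuinely different route from the paper's. The paper argues indirectly: it shows that $v$ cannot be a vertex of $Q':=\conv(\{v'\}\cup V)$, by observing that if it were, then some facet of $Q'$ through $v$ would have to avoid $v'$ and hence coincide with one of the $H_i$, contradicting $v'\in H_i^-$ (strict case); the boundary case $v'\in\partial(\cap_i H_i^-)$ is then handled by a continuity argument. Your approach is instead a direct tangent-cone computation: you push $v$ along $d=v-v'$ into $P$, then explicitly solve for $v$ as a convex combination of $v'$ and $V\setminus\{v\}$. This is more elementary in that it avoids analyzing the facet structure of the enlarged polytope and treats the boundary case uniformly (no limiting argument needed); the paper's argument is shorter but leans on more polytope machinery and the continuity step. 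Both are clean; yours has the advantage of producing the convex combination explicitly.
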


\begin{proof}
The case $v'=v$ is trivial, so let $v'\neq v$. 
It is sufficient to show that $v$ is not a vertex of $Q:=\conv(\{v'\}\cup V)$, from which $v\in \conv(\{v'\} \cup (V \setminus \{v\} ))$ and $P \subseteq \conv(\{v'\} \cup (V \setminus \{v\} ))$ follows. 
The point $v'$ is a vertex of $Q$, because $v'\not\in P$. 
Consider first the case where  $v'$ is in the interior of $\cap H_i^-$, which is to say that $v'$ is not contained in any $ H_i^+$. 
If $v$ was a vertex of $Q$, then one $H_i$ would support a facet of $Q$ (otherwise $v'$ would be incident to all facets incident to $v$). This would contradict the fact that $v'\not\in H_i^+$. 
The general case $v'\in\cap H_i^-$ results from continuity.  
\end{proof}

Proposition~\ref{oddnot} allows us to describe some distributions that cannot be represented by RBMs. 
A code $\Ccal\subset\{0,1\}^n$ extends another code $\Ccal' \subset\{0,1\}^r$, $r\leq n$,  if restricting $\Ccal$ to some $r$ indices yields $\Ccal'$.

\begin{corollary}\label{corollaryoddnot}
If $m$ is an even non-zero natural number and $m < n$, then $\RBM_{n,m}\cap\Hcal_\Ccal=\emptyset$ for
any code $\Ccal\subset\{0,1\}^n$ extending $Z_{+,m+1}$ or $Z_{-,m+1}$. 
In particular, when $n$ is an odd natural number larger than one, $\RBM_{n,n-1}$ cannot represent any distribution with $2^{n-1}$ strong modes. 
\end{corollary}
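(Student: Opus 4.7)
The plan is to reduce to Proposition~\ref{oddnot} by restricting attention to the $m+1$ coordinates on which $\Ccal$ looks like $Z_{\pm,m+1}$. Suppose for contradiction that $\RBM_{n,m} \cap \Hcal_\Ccal \neq \emptyset$ for some $\Ccal$ that extends $Z_{+,m+1}$ (the $Z_{-,m+1}$ case is symmetric). Theorem~\ref{zonotopecondpropo} then yields an $m$-zonoset $\Zcal = \{\sum_{i \in I} W_i + B\}_{I \subseteq [m]} \subset \R^n$ whose points lie, one each, in the $\Ccal$-orthants of $\R^n$. The goal is to push $\Zcal$ forward to an $m$-zonoset in $\R^{m+1}$ that hits every even orthant, contradicting Proposition~\ref{oddnot} in dimension $m+1$ (which is odd and larger than one because $m$ is even and nonzero).

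Concretely, let $J \subset [n]$ be the $(m+1)$-element index set on which $\Ccal|_J = Z_{+,m+1}$, and let $\pi_J \colon \R^n \to \R^{m+1}$ be the corresponding coordinate projection. Applying $\pi_J$ to every generator $W_i$ and to $B$ gives the multiset $\pi_J(\Zcal) = \{\sum_{i \in I} \pi_J(W_i) + \pi_J(B)\}_{I \subseteq [m]}$, which is by definition an $m$-zonoset in $\R^{m+1}$. Because each point of $\Zcal$ sits strictly inside some $\Ccal$-orthant, its sign vector has no zero entries, so the sign vector of its image under $\pi_J$ is the restriction of the original sign vector. Consequently $\pi_J(\Zcal)$ meets every even orthant of $\R^{m+1}$, contradicting Proposition~\ref{oddnot}.

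For the second assertion, given odd $n > 1$ and $p \in \RBM_{n,n-1}$ with $2^{n-1}$ strong modes, the mode set $\Ccal$ has cardinality $2^{n-1}$ and pairwise Hamming distance at least two, so $\Ccal \in \{Z_{+,n}, Z_{-,n}\}$ by the uniqueness discussion preceding Definition~\ref{strongmodedef}. Setting $m = n-1$ (even, nonzero, and strictly less than $n$), $\Ccal$ coincides with $Z_{\pm,m+1}$ and so trivially extends it, and the first part delivers the contradiction.

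I do not anticipate a genuine obstacle here: the nontrivial content sits in Proposition~\ref{oddnot}, and what remains is the essentially mechanical verification that a coordinate projection sends an $m$-zonoset to an $m$-zonoset and respects sign vectors with no zero entries. The only small thing to be careful about is ensuring that the projection does not introduce zero coordinates, which is immediate because the preimage points lie strictly inside orthants of $\R^n$ and the surviving coordinates are a subset of the nonzero ones.
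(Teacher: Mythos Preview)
Your proof is correct and follows the same route as the paper: invoke Theorem~\ref{zonotopecondpropo} to obtain an $m$-zonoset meeting every $\Ccal$-orthant, project to the $(m+1)$ distinguished coordinates, and contradict Proposition~\ref{oddnot}; the paper's proof is simply a terser version of this. Two minor remarks: the ``one each'' claim is stronger than what Theorem~\ref{zonotopecondpropo} literally states (though it follows by counting and is anyway unnecessary for the argument), and the uniqueness of $Z_{\pm,n}$ as maximal distance-two codes appears \emph{after} Definition~\ref{strongmodedef}, not before.
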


\begin{proof}%[Proof of Corollary~\ref{corollaryoddnot}]
If there is an $m$-zonoset with points in every $\Ccal$-orthant of $\R^n$ and there is a restriction of $\Ccal$ to $Z_{+,m+1}$ or $Z_{-,m+1}$, then there is  an $m$-zonoset contradicting Proposition~\ref{oddnot}. By Theorem~\ref{zonotopecondpropo}, $\RBM_{n,m}$ cannot represent distributions with strong modes $\Ccal$. 
\end{proof}

As a side remark, Corollary~\ref{corollaryoddnot} implies, in particular, that the graphical probability model on the bipartite graph $K_{n,m}$ (a fully observable version of the RBM model) does not contain in its closure any distribution supported on a set $\Y\subset\{0,1\}^{n+m}$ with 

\begin{equation*}
\{(x_{i_1},\ldots,x_{i_{m+1}})\in\{0,1\}^{m+1}\colon (x_1,\ldots, x_n,x_{n+1},\ldots,x_{n+m})\in \Y\} = Z_{\pm,m+1}
\end{equation*} 
for some $1\leq i_1 < \cdots < i_{m+1} \leq n$. 

In Section~\ref{section6} (Corollary~\ref{cor65}) we extend the statement of Corollary~\ref{corollaryoddnot} by showing that $\RBM_{6,5}$ cannot represent distributions with $2^{6-1}$ strong modes.

\subsubsection[Polyhedral approximation of the full-dimensional model RBM32]{Polyhedral approximation of the full-dimensional model ${\RBM_{3,2}}$} 
\label{sec:polyapprRBM32}

The model $\RBM_{3,2}$ is particularly interesting, because it is the smallest candidate of an RBM universal approximator on $\{0,1\}^3$ in terms of the number of mixture components of the mixtures of products that it represents, but it has less than $2^{n-1}-1$ hidden units, the upper bound for the number of hidden units of the smallest RBM universal approximator given in~\cite{Montufar2011}. 
Note that the  model $\RBM_{3,1}=\Mcal_{3,2}$ is readily full dimensional. 

By Corollary~\ref{corollaryoddnot}, $\RBM_{3,2}$  does not contain any distribution with four strong modes. 
We illustrate this explicitly: By Theorem~\ref{zonotopecondpropo}, if $\RBM_{3,2}\cap\,\mathcal{H}_3\neq\emptyset$, then 
\begin{equation}
 \operatorname{sgn}
 \begin{pmatrix}
           B\\
	W_1 + B\\
	W_2 + B\\
	W_1 + W_2 + B
 \end{pmatrix}
{=} \begin{pmatrix}
           + &-& -\\
	-&+&-\\
	-&-&+\\
	+&+&+
       \end{pmatrix} \label{equationsigns} 
\end{equation}
up to permutations of rows. But it is quickly verified that this equation cannot be satisfied. 

The set $\mathcal{H}_{3}$ is the disjoint union of $\mathcal{H}_{3}^+ =\mathcal{H}_{Z_{+,3}}$ and $\mathcal{H}_{3}^- =\mathcal{H}_{Z_{-,3}}$. 
The set $\overline{\mathcal{H}_{3}^+}$ is the $7$-dimensional simplex defined by the intersection of the $8$ half-spaces with inequalities $p(z) \geq \sum_{y:d_H(z,y)=1}p(y)$ for $z \in Z_{+,3}$ and $p(y)\geq 0$ for all $ y\in Z_{-,3}$. Its vertices are the uniform distributions on the following sets: 
\begin{gather*}
  \{000, 001, 011, 101\},   
 \{011, 101, 110, 111\},  
 \{000, 010, 011, 110\},   \{000, 100, 101, 110\}, \\  \{000\},\{011\},\{101\},\{110\}. 
\end{gather*}
Denote these distributions by $u_1,\ldots, u_8\in\Pcal_3\subset\R^8$. 
The volume of $\Hcal_3^+$ satisfies

\begin{equation*}
\operatorname{vol}(\mathcal{H}_{3}^+)/\operatorname{vol}(\mathcal{P}_{3}) =\det(u_1,\ldots, u_8) = \frac{1}{256} = 0.00390625. 
\end{equation*}
Since $\Hcal_3^+$ and $\Hcal_3^-$ are congruent, we obtain $\vol(\Hcal_3) / \vol(\Pcal_3) = \frac{1}{128} = 0.0078125$. 
This is a lower bound for $\vol(\Pcal_3\setminus\RBM_{3,2})/\vol(\Pcal_3)$. 

Now let us briefly discuss to what extent $\Hcal_3$ exhausts the complement of $\RBM_{3,2}$. 
The first four vertices of $\overline{\Hcal_3^+}$, $u_1,\ldots, u_4$, are mixtures of two point measures and one uniform distribution on a pair of Hamming distance one. 
The last four vertices, $u_5,\ldots, u_8$, are the point measures on $Z_{+,3}$. 
By \cite[Theorem~1]{MonRauh2011}, all these distributions are contained in $\RBM_{3,2}$ (by symmetry, the vertices of $\overline{\mathcal{H}_{3}^-}$ are also in $\RBM_{3,2}$). 
The distributions in the relative interiors of the edges between the first four vertices are not in $\RBM_{3,2}$. 
The relative interior of edges connecting one of the first four and one of the last four vertices are in $\RBM_{3,2}$ if they have support of cardinality four and are not if they have support of cardinality five. 
We conjecture that $\RBM_{3,2}\cap\Gcal_{3}=\emptyset$.

\subsection{Linear threshold codes}\label{section:LTC}

\begin{definition}
A {\em linear threshold function} (LTF) with $m$ inputs is a function 
\begin{equation*}
f\colon\{0,1\}^m\to\{-,+\};\quad  y\mapsto \sgn((\sum_{j\in[m]} w_j y_j ) + b), 
\end{equation*}
where $w\in\R^m$ is called {\em weight vector} and $b\in\R$ {\em bias}. 
A subset $\Ccal\subset\{0,1\}^m\subset\R^m$ is {\em linearly separable} 
iff there exists an LTF with $f(\Ccal)=+$ and $f(\{0,1\}^m\setminus \Ccal)=-$. 
For convenience we identify $-/+$ and $0/1$ vectors via $-\leftrightarrow0$ and $+\leftrightarrow1$. 
The opposite $\overline{x}$ of a binary vector $x$ is the vector given by inverting all entries of $x$. 
\end{definition}
%A linear threshold function for a $k$-valued input corresponds to a separations of $k$ independent points by an affine hyperplane. 

LTFs are also known as {\em McCulloch-Pitts neurons} and have been studied in the context of feed-forward artificial neural networks. 
The problem of separating subsets of vertices of the $m$-dimensional hypercube by hyperplane arrangements (multi-label classification) has drawn much attention, see, e.g., \cite{Wenzel2000284}. 
It is known that the logarithm of the number of LTFs with $m$ inputs is asymptotically of order $m^2$, see~\cite{Zuev,Ojha}, but the exact number is only known for  $m \leq 9$, see~\cite{Winder,Muroga,oeis}. 
The study of LTFs simplifies when  $f(x_1,\ldots,x_m)=\overline{f}(\overline{x_1},\ldots,\overline{x_m})$ for all $x\in \{0,1\}^m$, in which case they are called {\em self-dual}. 
If an LTF has an equal number of positive and negative points, then  it separates every input from its opposite and is self-dual. % (see Observation~\ref{selfdualpropo}).

\begin{comment}
For completeness we give a proof of the following: 
\begin{observation}\label{selfdualpropo}
Suppose a linear threshold function has an equal number of positive and negative points.  
Then it separates every vertex of the $m$-cube and its opposite, and is self-dual. 
\end{observation}
\begin{proof}%[Proof of Observation~\ref{selfdualpropo}]
Each linear threshold function is defined by a hyperplane $H(w,b)$ normal to a vector $w\in \R^m$ shifted by $b\in \R$ and has positive points on the $m$-cube $C_+(w,b)=\{x \in \{-1,1\}^m: w^\top x + b  > 0\}$.  We may assume that $H(w,0)$ does not intersect any points of $\{-1,1\}^m$; if it did, we could perturb it slightly without changing $C_+(w,b)$.  Since $H(w,0)$ passes through the origin, $C_+(w,0)$ contains exactly $2^{m-1}$ points, and if $x \in C_+(w,0)$ then its complement $\bar{x} \in C_-(w,0)$.  Since $C_+(w,0) \subset C_+(w,b)$, if $C_+(w,b)$ contains any pair $x, \bar{x}$, it must contain strictly more than $2^{m-1}$ points.
\end{proof}
\end{comment}

\begin{definition}\label{defLTC}
A subset $\Ccal\subseteq\{0,1\}^n \cong\{-,+\}^n$ is an {\em $(n,m)$-linear threshold code} (LTC)  if there exist $n$ linear threshold functions $f_i\colon\{0,1\}^m\to\{0,1\}$, $i\in[n]$ with 
\begin{equation*}
\{(f_1(y),f_2(y),\ldots, f_n(y)) \in \{0,1\}^n\colon y\in\{0,1\}^m\}=\Ccal. 
\end{equation*} 
Equivalently, $\Ccal$ is an $(n,m)$-LTC if it is the image of the $\down$ inference function of $\RBM_{n,m}$ for some choice of the model parameters. 
If all $f_i$ can be chosen self-dual, then $\Ccal$ is called {\em homogeneous}. 
\end{definition}

In the following examples, an LTF with $m$ inputs is written as a list of the vertices of the $m$-cube with a bar on inputs with negative output and no bar on inputs with positive output. 
For notational convenience, each vertex $x=(x_1,\ldots, x_m)\in\{0,1\}^m$ is labeled by $1+\sum_{j\in[m]}2^{j-1}x_j$, which is the decimal representation of the binary vector plus one. 
For example, an LTF with two inputs, mapping $(00), (01)$ to $0$ and $(10), (11)$ to $1$, is written as $\overline{12}34$.  

\begin{example}
Let $n=3$ and $m=2$. There are only two ways to linearly separate the vertices of the unit square into sets of cardinality two: % (up to opposites): 
$12\overline{34}$  and $1\overline{2}3\overline{4}$. 
These are the only possible columns of a homogeneous LTC with two inputs (up to opposites). The code $Z_{\pm,3}$ is not a $(3,2)$-LTC, because 
%whenever $Z_{\pm,3}$ is written as a $4\times 3$-matrix, 
it has three non-equivalent columns. 
This shows that there does not exist a $2$-zonoset with vertices in the four even, or odd, orthants of $\R^3$, and that  $\RBM_{3,2}$ does not contain any distributions with four strong modes. 

An alternative way of proving this is as follows.     
The Hamming distance between any two elements of $Z_{\pm,n}$ is even. 
If the distance of any two vertices of the square induced by an arrangement of three lines is even and non-zero, then each edge of the square is sliced at least twice, and in total at least $8$ edges are sliced (repetitions allowed). 
On the other hand, each line slices at most two edges of the square, and so three lines slice at most $6$ edges (repetitions allowed). 
%The maximal number of edges of the $m$-cube sliced by a hyperplane is $\lceil m/2\rceil  {m \choose \lceil m/2\rceil}$, see~\cite{ONeil1971}.  
\end{example}

\begin{figure}%[h!]
\begin{center}
\setlength{\unitlength}{1.2cm}
\begin{picture}(4.5,4.95)(-0.2,0.45)
\put(-.2,0.1){\includegraphics[clip=true,trim=4.5cm 14.cm 9cm 4.6cm,width=5.5cm]{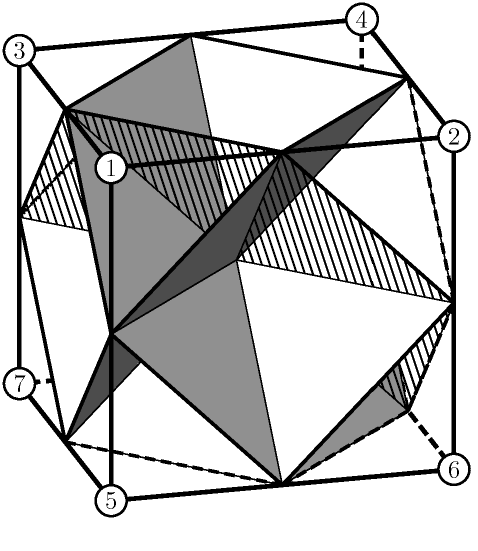}}
\end{picture}
\end{center}
\caption{
The four slicings of the 3-cube discussed in Example~\ref{exrbm43}.
}\label{slicescube}
\end{figure}

\begin{example}\label{exrbm43}
Let $n=4$ and $m=3$.  There are  $104$ ways to linearly separate  the vertices of the $3$-cube, see~\cite{Ojha}. 
A complete list appears in~\cite[Section~3.8]{bjoerner1999oriented}. 
The vertices of the $3$-cube are in the $Z_{+,4}$-cells of an arrangement of four hyperplanes corresponding to the $(4,3)$-LTC with following LTFs:  
\begin{equation*}
 123\overline{4}5\overline{678},\; {12}\overline{3}{4}\overline{5}{6}\overline{78},\; 1\overline{2}34\overline{56}7\overline{8},\;   1\overline{234}567\overline{8}.
 \end{equation*}
This arrangement corresponds to a $3$-zonoset with points in the $8$ even orthants of $\R^4$ (Theorem~\ref{thm:equivalences}). The zonoset can be realized as follows: 
\begin{equation}
\begin{minipage}[c]{.45\textwidth}
\begin{align}
{}\quad{}\quad w=&\left(\begin{array}{r r r r}
                         -1 & -1 & -1 &1 \\ -1 & -1 & 1& -1\\ -1 & 1 & -1& -1  
          \end{array}\right); \nonumber\\
b=&\frac{1}{2}\begin{pmatrix} 3& 1& 1& 1\end{pmatrix};\nonumber
\end{align}
\end{minipage}
\begin{minipage}[c]{.55\textwidth}
\begin{equation}
\Zcal=\frac{1}{2}\left(
\begin{array}{r r r r}3 & 1 & 1& 1 \\  1& 3& -1 & -1 \\  1 & -1  & 3 & -1\\   -1 & 1& 1& -3\\  1 & -1 & -1 & 3\\  -1 & 1& -3 & 1\\  -1 & -3 & 1& 1\\ -3 & -1 & -1 & -1
\end{array}
\right).
\quad{}\quad{}\label{weights43}
\end{equation}
\end{minipage}
\nonumber
\end{equation}
This choice of $w$ and $b$ corresponds to a central arrangement of four hyperplanes slicing each edge of the $3$-cube exactly twice, as shown in Figure~\ref{slicescube}. 
\end{example}

\newcommand{\p}{\,}
\begin{example} \label{n5example}
Let $n = 5$ and $m = 4$. 
There are three symmetry types of self-dual LTFs with four inputs, see~\cite{Muroga}. 
The following are representatives of the three types: 
\begin{gather*}
 \overline{1\p2 \p3\p  4 \p5\p  6 \p7\p8 } \p  9\p10 \p11\p  12 \p13\p  14 \p15\p16 \;;\\
\overline{ 1\p2 \p3\p  4 \p5\p6\p7} \p8\p  \overline{9} \p 10 \p 11 \p 12  \p 13 \p   14  \p 15 \p 16 \;;\\ 
\overline{ 1\p2\p3\p  4 \p5\p6} \p7\p8\p  \overline{9\p 10 } \p 11 \p 12  \p 13 \p   14 \p 15 \p 16  \;.
\end{gather*}
By Proposition~\ref{oddnot}, the code $Z_{\pm,5}$ cannot be realized by any $5$ LTFs, i.e., as a $(5,4)$-LTC, and $\RBM_{5,4}$ does not contain any distribution with $16$ strong modes.   
\end{example}

The following example presents a kind of binary code $\Ccal$ of cardinality $2^m$ with $\RBM_{n,m}\cap\Hcal_\Ccal=\emptyset$ which is not covered by Corollary~\ref{corollaryoddnot}.  
\begin{example}
Let $n = 5$ and $m=3$. 
Let $x',x''\in Z_{\pm,4}$ with $d_H(x',x'')=4$, and 
\begin{multline*}
\Ccal=\Big\{(x_1,\ldots,x_5)\colon (x_1,\ldots,x_{4})\in Z_{\pm,4},  
 x_5 =
\begin{cases}
1 &\text{ if } (x_1,\ldots,x_{4})\in\{x',x''\} \\
0 & \text{ otherwise}
\end{cases}\Big\}.
\end{multline*} 
If $\Ccal$ is an LTC, some hyperplane separates two vertices of the $3$-cube from the other vertices (corresponding to $x_{5}=1$ only for two points). These two vertices must be connected by an edge of the $3$-cube. 
Since $d_H(x',x'')=4$, four hyperplanes pass through this edge. 
There are only three different central hyperplanes through an edge of the $3$-cube, but four different central hyperplanes are required to produce $Z_{\pm,4}$. 
Hence $\Ccal$ is not a $(5,3)$-LTC. 
\end{example}

\subsection{Multi-covering numbers of hypercubes}\label{section6}

The previous section shows that the sets of strong modes realizable by $\RBM_{n,m}$ are related to the solution of the following problem: 
\begin{problem}
Let $m\leq n$. Consider an $m$-zonoset $\Zcal$ in $\R^n$ which does not intersect any two orthants separated by a single coordinate hyperplane.  How many orthants of $\R^n$ does $\Zcal$ intersect at most? 
\end{problem}

As we discuss in the following, this problem is related to the long standing problem of computing the {\em covering numbers} of hypercubes.  
\begin{definition}
The  {\em covering number}  of a hypercube is the smallest number of hyperplanes that slice each edge of the hypercube at least once. 
An edge is sliced by a hyperplane if the hyperplane intersects the relative interior of the edge and does not contain any vertices of the hypercube. 
A {\em cut} is the collection of all edges sliced by a hyperplane and corresponds to a linear threshold function. 
\end{definition}
The $m$ hyperplanes with normal vectors equal to the standard basis of $\R^m$ passing through the center of the $m$-dimensional hypercube slice all its edges. This arrangement is not always optimal. Paterson found $5$ hyperplanes slicing all edges of the $6$-cube, see~\cite{saks}. This shows that covering numbers do not behave trivially. The covering numbers are known only for hypercubes of dimension $\leq 6$. Computing them in higher dimensions is challenging, even in the cases where all cuts are known. Now: 
\begin{proposition}\label{coverpro}
If $Z_{+,n}$ is an $(n,n - 1)$-LTC, then 
there exists an arrangement of $n$ hyperplanes through the center of the $(n - 1)$-cube slicing each edge an even non-zero number of times. 
\end{proposition}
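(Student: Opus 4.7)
My plan is to start with an arbitrary realization of $Z_{+,n}$ as an $(n,n-1)$-LTC by linear threshold functions $f_1,\ldots,f_n$ on $\{0,1\}^{n-1}$, show that the induced hyperplanes already slice every edge of the cube an even non-zero number of times, and then argue that each $f_i$ is in fact self-dual, so that its defining hyperplane can be shifted to pass through the center of the cube without reclassifying any vertex.

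First I would verify the even, non-zero slicing property. The joint map $\phi:=(f_1,\ldots,f_n)\colon \{0,1\}^{n-1}\to Z_{+,n}$ is surjective by definition of LTC, and since $|\{0,1\}^{n-1}|=|Z_{+,n}|=2^{n-1}$ it is bijective. For any edge $\{y,y'\}$ of the $(n-1)$-cube, the number of hyperplanes slicing it equals $d_H(\phi(y),\phi(y'))$; this is strictly positive by injectivity, and it is even because $\phi(y)$ and $\phi(y')$ both lie in the even-weight class $Z_{+,n}$. Hence every edge is sliced at least twice.

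Next I would show that each $f_i$ may be taken to be defined by a central hyperplane. Since $Z_{+,n}$ is invariant under coordinate permutations, $|\{x\in Z_{+,n}\colon x_i=1\}|=2^{n-2}$ for every $i\in[n]$; pulling this back through the bijection $\phi$ yields $|\{y\in\{0,1\}^{n-1}\colon f_i(y)=1\}|=2^{n-2}$, so $f_i$ is balanced and, by the observation preceding Definition~\ref{defLTC}, self-dual: $f_i(y)\neq f_i(\overline{y})$ for all $y$. Writing $f_i(z)=\sgn(w\cdot z+c)$ in centered coordinates $z\in\{-\tfrac12,+\tfrac12\}^{n-1}$, self-duality forces $\sgn(w\cdot z+c)=-\sgn(-w\cdot z+c)$ on every vertex, so no vertex can lie between the parallel hyperplanes $w\cdot z+c=0$ and $w\cdot z=0$; consequently $\sgn(w\cdot z)$ agrees with $f_i$ on every vertex. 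Back in the original coordinates, the central hyperplane $w\cdot z=0$ corresponds to a hyperplane through the center $(\tfrac12,\ldots,\tfrac12)$ of the $(n-1)$-cube and induces the same labeling as $f_i$.

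The only step that requires any care is the self-dual-to-central argument, where one must exclude the possibility that shrinking the bias to zero reclassifies some vertex. Once that is dispatched by the sign comparison above, the proposition follows: the $n$ central hyperplanes so obtained form the required arrangement, and since the slicing pattern on each edge is determined entirely by the vertex labelings (which are preserved), each edge is still sliced an even non-zero number of times by the first step.
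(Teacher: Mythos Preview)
Your proof is correct and follows essentially the same route as the paper's: bijectivity of $\phi$ from the cardinality match, even non-zero edge slicing from parity plus injectivity, and centrality from the observation that each coordinate of $Z_{+,n}$ is balanced so each $f_i$ is self-dual. You spell out the self-dual-to-central step (that shrinking the bias to zero cannot flip any vertex label) more carefully than the paper, which simply asserts that a balanced LTF ``can be chosen through the center of the cube.''
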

\begin{proof}
Given the assumption, there exists an arrangement of $n$ hyperplanes in $\R^{n-1}$ such that each vertex of the $(n-1)$-cube is in one of the $Z_{+,n}$-cells of the arrangement. 
Each vertex is separated by an even, positive number of hyperplanes from any other vertex, since any two elements of $Z_{+,n}$ differ in an even number of entries. Two vertices cannot be contained in the same cell of the arrangement, since $|Z_{+,n}|=2^{n-1}$ equals the total number of vertices of the $(n-1)$-cube. 
The code $Z_{+,n}$ is homogeneous, as  each coordinate $i\in[n]$ has the same number of zeros and ones, and hence each hyperplane in the arrangement can be chosen through the center of the cube. % (Observation~\ref{selfdualpropo}). 
\end{proof}

Proposition~\ref{coverpro} motivates the following problem: 
\begin{problem}[Multi-covering number]
What is the smallest arrangement of hyperplanes, if one exists, that slices each edge of a hypercube a given number of times? 
\end{problem}

Of particular interest is the number of hyperplanes needed to slice each edge of the $m$-cube an even non-zero number of times. The edges of the $m$-cube can be sliced exactly twice by $2m$ hyperplanes  with normal vectors equal to the standard basis vectors of $\R^m$, counted with multiplicity two. 
Proposition~\ref{oddnot} shows that if $m$ is even and larger than zero, there is no arrangement of $(m+1)$ hyperplanes for which each vertex of the $m$-cube lies in a different cell and any two vertices are separated by an even number of hyperplanes. 
This suggests that when $m$ is even, there is no arrangement of $(m+1)$ hyperplanes slicing all edges of the $m$-cube exactly twice; at least not one for which each vertex lies in a different cell. 

There is exactly one way to slice all edges of the $3$-cube an even non-zero number of times by four hyperplanes, namely the way illustrated in Figure~\ref{slicescube}. To see that this is the only way, note that the $3$-cube has $12$ edges and that there are only $4$ different cuts that slice $6$ edges. 

The $4$-cube has $16$ vertices, $32$ edges, a total of $940$ different cuts, $3$ symmetry classes of central cuts, and $52$ different central cuts.  The maximal number of edges sliced by a cut is~$12$. Hence: 

\begin{proposition}
There is no arrangement of five hyperplanes, or less, slicing each edge of the four-dimensional cube at least twice.
\end{proposition}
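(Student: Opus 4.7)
The plan is to give a double counting argument using the two numerical facts stated just before the proposition: the $4$-cube has $32$ edges, and the maximal number of edges of the $4$-cube sliced by a single cut is $12$. First I would set up the counting. For any arrangement $\Acal$ of hyperplanes in $\R^4$, define the incidence count
\begin{equation*}
N(\Acal) \;=\; \sum_{e \text{ edge of } [0,1]^4}\, \#\{H\in\Acal : H \text{ slices } e\} \;=\; \sum_{H\in\Acal}\, \#\{e : H \text{ slices } e\}.
\end{equation*}
The hypothesis that each of the $32$ edges is sliced at least twice forces the left-hand sum to satisfy $N(\Acal) \geq 2\cdot 32 = 64$, while the fact that each cut slices at most $12$ edges of the $4$-cube bounds the right-hand sum by $N(\Acal) \leq 12\cdot |\Acal|$.

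Combining, any arrangement slicing every edge at least twice must have $|\Acal|\geq 64/12 > 5$, hence at least $6$ hyperplanes. The case of fewer than $5$ hyperplanes is then immediate, since removing hyperplanes only decreases $N(\Acal)$.

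The only nontrivial ingredient is the bound ``each cut slices at most $12$ edges of the $4$-cube,'' which the paper asserts immediately before the proposition based on the enumeration of the $940$ cuts of the $4$-cube; I would simply cite this enumeration. There is no real obstacle: the statement is a direct pigeonhole consequence of $5\cdot 12 < 2\cdot 32$. If one wished to make the cut bound self-contained, one could verify it by noting that the edges sliced by a linear threshold function $f$ are exactly those edges $(x,x')$ of $[0,1]^4$ with $f(x)\neq f(x')$, i.e., the edges of the cut in the Hasse/cube graph separating $f^{-1}(0)$ from $f^{-1}(1)$, and then checking by hand over the symmetry types of self-dual and non-self-dual LTFs on $\{0,1\}^4$ that no such bipartition produces more than $12$ crossing edges; this is the main potential obstacle if one does not wish to appeal to the enumeration.
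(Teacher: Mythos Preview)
Your argument is correct and matches the paper's exactly: the paper simply records that the $4$-cube has $32$ edges and that a single cut slices at most $12$ of them, then writes ``Hence:'' before the proposition, i.e., the same double count $5\cdot 12 = 60 < 64 = 2\cdot 32$. Your optional self-contained verification of the $12$-edge bound goes beyond what the paper does, which just appeals to the enumeration of cuts.
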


The complexity of the next easiest example is considerable. 
We tested all combinations of six cuts of the $5$-cube and found: 
\begin{computation}\label{compu5cu}
There is no arrangement of six, or less, central hyperplanes slicing each edge of the five-dimensional cube an even  non-zero number of times. 
\end{computation}

%Special instances of the statement are much easier to compute,  for example: {\em There is no arrangement of six central hyperplanes which slices each edge of the five-dimensional cube exactly twice.} 
In the following we explain some details of the computation. 
The $5$-cube has $80$ edges. 
There are $47\,285$ different ways of slicing them with affine hyperplanes, see~\cite{EmamyK20083156}. 
A cut is given by the indicator function on the set of edges sliced. A  list of the cuts can be found in~\cite{cubecuts}. 
An edge of the $m$-cube corresponds to a pair of binary vectors of length $m$ which differ in exactly one entry. Each edge is parallel to one coordinate vector of $\R^m$. The edges  can be organized in $m$ groups, corresponding to their directions. 
Within each group, the edges are naturally enumerated by the binary vectors of length $(m-1)$ containing the coordinate values that are equal for the two vertices of each edge. 
The central cuts can be characterized as the cuts which involve only pairs of opposite edges. 
The $5$-cube allows $7$ symmetry classes of central cuts and $941$ different central cuts. 
For each choice of $6$, or less, central cuts we computed the entry-wise addition of the indicator functions and found that this never produced an even non-zero value in each entry. 
On the other hand, $5$ is the covering number of the $5$-cube, see~\cite{EmamyK20083156}, and hence at least $6$ hyperplanes are needed to slice each edge twice.  
As a consequence of Computation~\ref{compu5cu} we have: 
\begin{corollary}\label{cor65}
The model $\RBM_{6,5}$ cannot represent any probability distribution with $32$ strong modes.
\end{corollary}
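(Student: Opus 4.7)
The plan is to chain together the results already established in the excerpt. Recall that a distribution on $\{0,1\}^6$ has at most $2^{n-1}=32$ strong modes, and this bound is attained only when the strong mode set is either $Z_{+,6}$ or $Z_{-,6}$, since these are the only subsets of $\{0,1\}^6$ of cardinality $32$ whose minimum Hamming distance is at least two. Moreover, by the relabeling symmetry of the RBM model noted in the remark following Proposition~\ref{cororbm42}, $\RBM_{6,5}$ represents a distribution with strong modes $Z_{+,6}$ if and only if it represents one with strong modes $Z_{-,6}$. Thus it suffices to rule out $\RBM_{6,5}\cap\Hcal_{Z_{+,6}}\neq\emptyset$.

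I would argue by contradiction. Suppose $\RBM_{6,5}$ contains some distribution with strong modes $Z_{+,6}$. By the first part of Theorem~\ref{zonotopecondpropo}, there would exist a $5$-zonoset in $\R^{6}$ with a point in every $Z_{+,6}$-orthant of $\R^6$. By the equivalence of {\bf ZP} and {\bf LTC} in Theorem~\ref{thm:equivalences}, this is equivalent to $Z_{+,6}$ being a $(6,5)$-linear threshold code.

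The final step is to apply Proposition~\ref{coverpro}, which converts the existence of such a $(6,5)$-LTC directly into an arrangement of $6$ central hyperplanes in $\R^{5}$ slicing each edge of the $5$-cube an even, non-zero number of times. But this is precisely what Computation~\ref{compu5cu} rules out: no arrangement of six or fewer central hyperplanes accomplishes this. The resulting contradiction proves that $\RBM_{6,5}\cap\Hcal_{Z_{+,6}}=\emptyset$, and hence, by the dichotomy above, $\RBM_{6,5}$ represents no distribution with $32$ strong modes at all.

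The entire argument is a short assembly of previously proved facts; there is no genuine obstacle here beyond confirming that Proposition~\ref{coverpro}, Theorem~\ref{zonotopecondpropo}, and the equivalence {\bf ZP}$\Leftrightarrow${\bf LTC} apply in the regime $n=6$, $m=5$ (so that $m\leq n$ and $|Z_{+,6}|=2^{n-1}=2^m\cdot 2^0$, giving us a code of the appropriate size to trigger the zonoset/LTC correspondence). The real content sits in the earlier computational lemma that enumerates central cuts of the $5$-cube; once that is granted, the corollary is immediate.
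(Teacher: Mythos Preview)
Your proposal is correct and is exactly the argument the paper has in mind: the corollary is stated immediately after Computation~\ref{compu5cu} with no further proof, and the implicit chain is precisely the one you wrote out---reduce to $\Ccal=Z_{+,6}$, apply Theorem~\ref{zonotopecondpropo} to get a $5$-zonoset hitting all $2^5$ even orthants (hence a $(6,5)$-LTC), feed this into Proposition~\ref{coverpro}, and contradict the computation. The only cosmetic point is that the cardinality check $|Z_{+,6}|=2^5$ is what forces the zonoset to hit \emph{exactly} the $Z_{+,6}$-orthants, so the \textbf{ZP}$\Leftrightarrow$\textbf{LTC} equivalence applies cleanly; you noted this, though the phrasing ``$2^m\cdot 2^0$'' is unnecessarily roundabout.
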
 
Indeed, we trained $\RBM_{6,5}$ to approximate the uniform probability distribution on $Z_{+,6}$ and found a Kullback-Leibler divergence minimum of $0.6309$ (with base-two logarithm), which is a relatively large value. 
For this computation we used contrastive divergence~\cite{Hinton2002} and likelihood gradient with numerous parameter initializations.

\subsection[Proof of Theorem~1.6]{Proof of Theorem~\ref{thm:equivalences}}\label{section:summary}

The equivalence %Theorem \ref{thm:equivalences} 
theorem from the introduction (illustrated in Figure \ref{fig:equivalences}) is a summary of observations from the previous subsections. 
For completeness we provide a proof. 
Recall the definition of {\bf LTC}, {\bf PR}, {\bf HP}, {\bf ZP}, {\bf SM}, and {\bf SP} given in Definition~\ref{def:summaryofproperties}. 
Let $n$ and $m$ be two integers and $\Ccal\subseteq\{0,1\}^n$. 

%\begin{proof}[Proof of Theorem \ref{thm:equivalences}] 
%\mbox{}
\begin{enumerate} %[leftmargin=.5cm]
\item 
The properties {\bf LTC}, {\bf HP,} and {\bf ZP} are equivalent. 

\smallskip
Let $W$ and $B$ be parameters making $\Ccal$ a linear threshold code, so that $\Ccal = \{\sgn ( h^\top W  +B)\colon h \in \{0,1\}^m \}$. 
Let $W_i$, $i=1,\ldots, n$ be the columns of $W$. 
The sign of $h^\top W_{i} +B_i$, $h \in \{0,1\}^m$ indicates which side of hyperplane $H_i$ in the arrangement $\Acal_{W,B}$ this $h$ lies on. 
Dually, the sign of $h^\top W_{i} +B_i$ indicates which side of the $i$-th coordinate hyperplane in $\R^n$ the point $h^\top W +B$ of the zonoset lies on. 

\item 
If $\Ccal$ satisfies {\bf PR} or {\bf SM}, then it is contained in an {\bf LTC} set. 

\smallskip
For $\Ccal$ to be the perfectly reconstructible, in particular it must be a subset of the image of a down inference function.   
%If it is maximal, then it is equal to the image of the down inference function and hence a linear threshold code by definition.

If $\Ccal$ has the {\bf SM} property, its vectors are at least Hamming distance~$2$ apart.  By Theorem~\ref{sobrecomplmix}, each point in $\Ccal$ is the unique maximizer of a conditional distribution $p(\cdot|h)$ and an image point of the down inference function. 

\item 
If the vectors in $\Ccal$ are at least Hamming distance~$2$ apart, then {\bf SP} implies both {\bf SM} and {\bf PR}. 

\smallskip
{\bf SP} with Hamming distance two implies that there is a distribution $p\in\RBM_{n,m}$ with $p(v)>0$ for each $v\in\Ccal$, and $p(v')=0$ for each neighbor $v'$ of each $v\in\Ccal$. 
Therefore, each element of $\Ccal$ is a strong mode of $p$, and {\bf SM}. 
Writing $p_\theta(v,h)$ as a matrix with rows labeled by $h$, the Hamming distance two condition implies by Theorem~\ref{sobrecomplmix} that each row has a single non-zero entry, so $\down_\theta \circ \up_\theta$ is the identity on $\Ccal$, so {\bf PR} holds. 

\item 
If the vectors in $\Ccal$ are at least Hamming distance~$2$ apart and $\Ccal$ satisfies an $\ell_1$ property, then {\bf LTC} implies {\bf SP}.  

\smallskip
This is by Theorem~\ref{zonotopecondpropo}. 
\end{enumerate}

%\end{proof}

\section{Relative representational power}\label{section:when}
\subsection{When does a mixture of products contain an RBM?} \label{section3}
Using the characterizations obtained in Sections \ref{section1} and \ref{section2}, we now prove the result on relative representational power discussed in the introduction and depicted in Figure~\ref{solnm}. 
To do this, we derive upper bounds for the smallest $m$ such that $\RBM_{n,m}$ contains probability distributions with $l$ strong modes, and show thereby that RBMs can represent many more modes than mixtures of products with the same number of parameters.  

Any representability result, as Example~\ref{exrbm43}, combined with the following observation, yields lower bounds on the smallest mixture of products which contains the RBM model. 
\begin{observation}\label{composingzonotopes}
Let $k\in\mathbb{N}$. Assume that for each $i\in[k]$ there is a matrix $W^{(i)}\in\R^{m_i\times n_i}$ and a vector $B^{(i)}\in\R^{n_i}$ which generate a zonoset $\{h^\top W^{(i)}+B^{(i)}\colon h\in\{0,1\}^{m_i}\}$ intersecting $K_i$ even orthants of $\R^{n_i}$. 
Then 
\begin{equation*} 
W=\begin{pmatrix}W^{(1)} &&\\ &\ddots& \\ &&W^{(k)}\end{pmatrix}\quad\text{and}\quad B=(B^{(1)},\ldots,B^{(k)})
\end{equation*} 
generate a zonoset $\{h^\top W +B\colon h\in\{0,1\}^{m_1+\cdots+m_k}\}$ intersecting $\prod_{i\in [k]} K_i$ even orthants of $\R^{n_1+\cdots+n_k}$. 
\end{observation}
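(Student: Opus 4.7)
The plan is to unfold the definitions and exploit the block-diagonal structure, so that signs in each coordinate block of a composed zonoset point depend only on the corresponding block of $h$.

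Concretely, given $h = (h^{(1)}, \ldots, h^{(k)}) \in \{0,1\}^{m_1 + \cdots + m_k}$, I would first write out
\[
h^\top W + B \;=\; \bigl(h^{(1)\top} W^{(1)} + B^{(1)},\, h^{(2)\top} W^{(2)} + B^{(2)},\, \ldots,\, h^{(k)\top} W^{(k)} + B^{(k)}\bigr) \,\in\, \R^{n_1 + \cdots + n_k},
\]
using the block structure of $W$. The key observation is that the sign of each coordinate in the $i$-th block of this point depends only on $h^{(i)}$, not on the other $h^{(j)}$'s. Consequently, the orthants of $\R^{n_1+\cdots+n_k}$ met by the composed zonoset are exactly the Cartesian products of orthants met by the individual zonosets: a sign pattern $(s^{(1)}, \ldots, s^{(k)}) \in \{-,+\}^{n_1+\cdots+n_k}$ is attained by some $h$ if and only if each $s^{(i)}$ is attained by some $h^{(i)}$ in the $i$-th individual zonoset.

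Next, I would check the parity. Concatenating sign vectors adds the counts of $+$ entries, and a sum of even integers is even, so the product of even orthants is an even orthant in the larger space. Since by hypothesis the $i$-th zonoset meets at least $K_i$ even orthants of $\R^{n_i}$, taking all products of such choices yields at least $\prod_{i\in[k]} K_i$ distinct even orthants of $\R^{n_1+\cdots+n_k}$, each of which is met by the composed zonoset.

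There is essentially no obstacle here; the only thing worth being careful about is the block-diagonal bookkeeping and the observation that evenness is preserved under concatenation (which would fail, e.g., for orthants of a fixed individual parity like ``odd''). The statement is really just the combinatorial identity $K_1 \cdots K_k$ coming from independence of the $h^{(i)}$'s in the block-diagonal construction.
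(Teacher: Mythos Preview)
Your argument is correct and complete; the paper states this observation without proof, so there is nothing to compare against beyond noting that your unfolding of the block-diagonal structure and the parity check are exactly what is needed.
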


The following theorem provides the justification for the statement in the introduction that ``the number of parameters of the smallest mixture of products model containing an RBM model grows exponentially in the number of parameters of the RBM for any fixed ratio $0\!<\!m/n\!<\!\infty$,'' and for Figure~\ref{solnm}. 

\begin{theorem}\label{notinclpropodos}\mbox{}
Let $n,m\in\mathbb{N}$. 
\begin{itemize}
\item %[(i)] 
If $4\lceil m/3 \rceil \leq n$, then $\RBM_{n,m}\cap\,\mathcal{H}_{n,2^m}\neq\emptyset$ and 
$$\Mcal_{n,k}\supseteq\RBM_{n,m}  \text{ iff }  k\geq2^m.$$  
\item %[(ii)] 
If $4\lceil m/3 \rceil  >  n$, then  $\RBM_{n,m}\cap\,\mathcal{H}_{n,L}\neq\emptyset$  and 
$$\Mcal_{n,k}\supseteq\RBM_{n,m}  \text{ only if } k\geq L,$$
where $L:=\min\{2^{l} + m-l , 2^{n-1}\}$, $l:=\max\{l\in\mathbb{N} \colon 4\lceil l/3 \rceil \leq n\}$. 
\end{itemize}
\end{theorem}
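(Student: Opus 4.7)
The strategy for both parts is to construct a distribution in $\RBM_{n,m}$ with the stated number of strong modes and then invoke Theorem~\ref{sobrecomplmix}, which caps the number of strong modes of any distribution in $\Mcal_{n,k}$ by $k$. The easy containment direction in Case~1 comes for free: writing $p(v)=\sum_{h\in\{0,1\}^m}p(h)p(v\mid h)$ with each conditional $p(v\mid h)$ a product distribution displays every RBM as a mixture of $2^m$ products, so $\RBM_{n,m}\subseteq\Mcal_{n,2^m}$ and $k\geq 2^m$ suffices.

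For the nontrivial (only-if) direction of Case~1, I would build an explicit zonoset via Example~\ref{exrbm43}, which provides a $3$-zonoset in $\R^4$ with a vertex of equal $\ell_1$-norm in each of the $8$ even orthants of $\R^4$; by Theorem~\ref{zonotopecondpropo} this yields a distribution in $\RBM_{4,3}$ with $|Z_{+,4}|=2^3$ strong modes. Writing $m=3q+r$ with $r\in\{0,1,2\}$, the direct sum of $q$ copies of this block (Observation~\ref{composingzonotopes}) gives an $\RBM_{4q,3q}$ distribution with $2^{3q}$ strong modes. A small remainder block is then appended: for $r=1$, a $1$-zonoset in $\R^2$ with vertices $\pm(1,1)$ of equal $\ell_1$-norm, realizing $\RBM_{2,1}$ with two strong modes $\{00,11\}$; for $r=2$, a $2$-zonoset in $\R^4$ with vertices $\pm(1,1,1,1)$ and $\pm(1,1,-1,-1)$ (all of $\ell_1$-norm $4$, orthants of pairwise Hamming distance $\geq 2$), realizing $\RBM_{4,2}$ with four strong modes. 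The composite construction fits in $4\lceil m/3\rceil\leq n$ visible units (pad the remaining visible coordinates with zero weights) and uses exactly $m$ hidden units, achieving $2^m$ strong modes. Theorem~\ref{sobrecomplmix} then forces $k\geq 2^m$, completing Case~1.

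For Case~2, let $l$ be the largest integer with $4\lceil l/3\rceil\leq n$, and apply the Case~1 construction to obtain a distribution $p_0\in\RBM_{n,l}$ whose strong modes form a code $\Ccal_0$ of size $2^l$. Each of the $m-l$ remaining hidden units is then used to add one additional strong mode at a fresh point $c_j\in\{0,1\}^n$ at Hamming distance $\geq 2$ from the current mode set, which is possible as long as the running total does not exceed $2^{n-1}$. The key observation is that augmenting an RBM by one hidden unit corresponds, in the product-of-experts form, to multiplying the unnormalized distribution by a factor $1+\exp(W_{\mathrm{new}}^{\top}v+C_{\mathrm{new}})$; choosing $W_{\mathrm{new}}\propto 2c_j-\mathds{1}$ with large coefficient and tuning $C_{\mathrm{new}}$ so that the exponent is very negative on $\Ccal_0\cup\{c_{l+1},\ldots,c_{j-1}\}$ and very positive at $c_j$ produces, in the limit, a distribution in the closure $\RBM_{n,l+1}$ with strong-mode set extended by $c_j$. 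Iterating this step $m-l$ times yields a distribution in $\RBM_{n,m}$ with $L=\min\{2^l+m-l,2^{n-1}\}$ strong modes, and a final application of Theorem~\ref{sobrecomplmix} delivers the necessary bound $k\geq L$.

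The main obstacle is the extension step in Case~2: one must ensure that the multiplicative bump used to install a new strong mode at $c_j$ preserves all previously installed strong modes and creates no spurious peaks at Hamming neighbors of $c_j$. The delicate point is that at Hamming neighbors of $c_j$ the new factor is smaller than at $c_j$ but not negligible, so $p_0(c_j)/p_0(\text{neighbor of }c_j)$ must remain controlled across the update. This is arranged by first passing $p_0$ to a limit in which it is approximately the uniform measure on $\Ccal_0$ (using the $\alpha\to\infty$ scaling from the proof of Theorem~\ref{zonotopecondpropo}), so all mode-to-neighbor ratios are large, and then choosing the new-unit scale large enough to dominate the residual discrepancies without disturbing the equality between mode values on $\Ccal_0$. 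A standard continuity argument on the closure of $\RBM_{n,m}$ then produces the required distribution.
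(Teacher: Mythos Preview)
Your proposal is correct and follows essentially the paper's approach: Case~1 via a block-diagonal zonoset assembled from copies of the $3\times 4$ block of Example~\ref{exrbm43} together with a small remainder block, and Case~2 by using each leftover hidden unit to install one additional strong mode. The only notable difference is that for Case~2 the paper simply cites the Le~Roux--Bengio result that $\RBM_{n,l+1}$ contains every mixture $\mu p+(1-\mu)\delta_x$ with $p\in\RBM_{n,l}$, which replaces your product-of-experts limit argument in one line; note also that $\Hcal_{n,L}$ asks only for \emph{at least} $L$ strong modes, so your closing worry about spurious peaks is unnecessary.
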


\begin{proof}
 Let $4\lceil m/3\rceil\leq n$. 
The {\em if} direction follows from  $\RBM_{n,m}\subseteq\Mcal_{n,2^m}$ for all $n$ and $m$. For the {\em only if} direction we show that $\RBM_{n,m}$ contains a probability distribution supported on a set of cardinality $2^m$ and minimum Hamming distance at least two (a distribution with $2^m$ strong modes). 
By Theorem~\ref{sobrecomplmix} such a distribution is in $\Mcal_{n,k}$ only if $k\geq 2^m$. 
Consider the following parameters: 
\begin{align*}
 W&=
\alpha {
\left(\begin{array}{c c c c c|}
     w &   &        &    &          \\
       & w &        &    &          \\
       &   & \ddots &    &          \\
       &   &        &  w &          \\
       &   &        &    & \tilde w \\
\end{array}
\begin{array}{c}
    \\
   \\
   0 \\
   \\
   \\ 
\end{array}
\right)
};
\quad
\begin{array}{c c l}
B&=& \alpha \left(b , b,  \ldots, b, -1, \ldots, -1 \right);\\ \\ 
b&=& \frac{1}{2} ( 3, 1, 1, 1) ;\\ \\
C&=&- W\frac{1}{2}(1,\ldots,1)^\top=
\alpha(1,\ldots,1)^\top, 
\end{array}
\end{align*}
where $\alpha\in\R$ is a constant, $w$ is the $3\times 4$-matrix defined in eq.~\eqref{weights43}, $\tilde w$ consists of the first or the first two rows of $w$, and $B$ is $\alpha$ times $\lceil m/3 \rceil$ copies of $b$ followed by $-1$s. 
Let $\lambda_i$ be the set of indices $\{1,2,3,4\}+4(i-1)\subset[n]$. 
For $\alpha\to\infty$ the visible distribution generated by $\RBM_{n,m}$ with parameters $W,B$, and $C$ is the uniform distribution on following subset of   
$Z_{+,n}$ of cardinality $2^m$: 
$\{ v\in \{0,1\}^n\colon \sum_{j\in\lambda_i}v_j \text{ is even for all }  i 
,\text{ and } v_j=0 \text{ for all }  j > 4\lceil m/3\rceil \}$.   

Now let $4\lceil m/3\rceil\geq n$. By the first part, $\RBM_{n,l}$ contains some $p$ with $2^l$ strong modes in $Z_{+,n}$. 
Moreover, $\RBM_{n,l+1}$ contains $\mu p + (1-\mu) \delta_x$ for any $p\in\RBM_{n,l}$, $x\in\{0,1\}^n$ and $\mu\in[0,1]$ (see~\cite{LeRoux2008}), such that each additional hidden unit can be used to increase the number of strong modes by one, until the set of strong modes is $Z_{+,n}$. 
\end{proof}

\begin{remark}
The statement of the first item of Theorem~\ref{notinclpropodos} remains true if $m=1\mod(3)$ and $4\lfloor m/3\rfloor +2 \leq n$. 
For $n<3$  we have $\Mcal_{n,k}=\RBM_{n,k-1}$ for any $k\in\mathbb{N}$. 
For $n=3$ we believe that $\Mcal_{3,3}$ and $\RBM_{3,2}$ are very similar, if not equal.  
\end{remark}

\subsection{When does an RBM contain a mixture of products?}\label{section5}

Complementary to question of when a mixture of products contains a product of mixtures, in this section we ask what is the smallest $m$ for which $\RBM_{n,m}$ contains $\Mcal_{n,k}$.  
We focus on an instance which we find particularly interesting: 
\begin{problem}\label{complpro}
Does $\RBM_{n,m}$ contain the mixture of products $\Mcal_{n,m+1}$?
\end{problem}

Both $\RBM_{n,m}$ and $\Mcal_{n,m+1}$ have $nm+n+m$ parameters and expected dimension $\min\{nm + n+m,2^n -1\}$.  
The expected dimension is also the true dimension of both models for most choices of $n$ and $m$~\cite{Catalisano2011,Cueto2010}. 
In the following we give a negative answer to Problem~\ref{complpro}. 

In the previous section  we showed that the non-negative rank of probability distributions in the model $\RBM_{n,m}$ is as large as $2^m$; there are tables of probabilities (probability distributions) 
represented by the RBM model, which cannot be represented as non-negative sums of less than $2^m$ non-negative rank-one tables (product distributions). 
The rank of a table $p$ is the smallest number $k$ such that $p$ can be written as a sum of $k$ rank-one tables. 
Here, a multivariate probability distribution $p=p(x_1,\ldots,x_n)$ with $x_i\in\X_i$, $|\X_i|=r_i$ for $i=1,\ldots,n$ is expressed as an $n$-way $r_1\times\cdots\times r_n$ table with value $p(x_1,\ldots,x_n)$ at the entry $(x_1,\ldots,x_n)$. 
A rank-one table is an outer-product of $n$ vectors of lengths $r_1, \ldots,r_n$. 
A product distribution in $\Mcal_{n,1}$ is the outer-product of the marginal distributions on the variables $x_1$ through $x_n$ and is a non-negative rank-one table. 
By definition, the elements of $\Mcal_{n,k}$ have non-negative rank at most $k$, and therefore also rank at most $k$. 
Since $\RBM_{n,m}$ is contained in $\Mcal_{n,2^m}$, any $p\in\RBM_{n,m}$ has rank at most $2^m$.

Two models $A$ and $B$  are called {\em generically distinguishable} if $A\cap B$ has relative measure zero in $A$ and in $B$. The restriction ``generically'' is useful, because in most cases of interest the models do intersect (e.g., mixtures of products and RBMs contain the uniform distribution). 
A {\em flattening} of  a table of probabilities is a way of arranging its entries in a two-way table (i.e., a matrix) by grouping the variables in two groups and considering the joint states of the variables in each of the groups as the states of two variables. The following is an example of a  flattening of a table $p$ for four binary variables: 
\begin{equation*}
p=\begin{pmatrix}
p_{00,00} & p_{00,01} & p_{00,10} & p_{00,11}\\
p_{01,00} & p_{01,01} & p_{01,10} & p_{01,11}\\
p_{10,00} & p_{10,01} & p_{10,10} & p_{10,11}\\
p_{11,00} & p_{11,01} & p_{11,10} & p_{11,11}
\end{pmatrix}.
\end{equation*}
The matrix rank of any flattening of a table $p$ is upper bounded by the outer-product rank of $p$. 
In particular, the vanishing of the $(k+1)\times(k+1)$-minors of flattenings are {\em algebraic invariants} of the model $\Mcal_{n,k}$. 

\begin{theorem}\label{mixtnotinrbm}
If $m\leq n/2$, then the model $\RBM_{n,m}$ contains points of rank $2^m$. 
If, furthermore, $m+1\neq 3$ or $n\neq 4$, then the models 
$\RBM_{n,m}$ and $\Mcal_{n,m+1}$ have dimension $nm +n +m$ and intersect at a set of dimension strictly less than $nm +n +m$.  \end{theorem}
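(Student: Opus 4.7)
The plan is to prove the two claims separately and then combine them via an algebraic-geometric argument. For Part 1 (existence of a rank-$2^m$ point in $\RBM_{n,m}$), I would give an explicit construction that witnesses maximum tensor rank through a matrix flattening. Since $m \leq n/2$, set aside $2m$ of the visible units and pair them up as $(v_i, v_{m+i})$ for $i = 1, \ldots, m$; the remaining $n - 2m$ visible units will carry only bias weights. Build a block-diagonal RBM in which the $i$-th hidden unit is coupled only to the $i$-th pair via non-trivial weights. Each block is a single hidden unit attached to two visible units and therefore lies in $\RBM_{2,1}$, which coincides with the full simplex $\Pcal_2$ (stated in the text just after Definition~\ref{def:RBM}), so its parameters can be tuned to realize any rank-$2$ distribution on $\{0,1\}^2$ (e.g., $p_i(0,0) = p_i(1,1) = \tfrac{2}{5}$, $p_i(0,1) = p_i(1,0) = \tfrac{1}{10}$). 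The resulting visible distribution is the outer product of the $m$ rank-$2$ blocks with a product distribution on the remaining variables.

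Now flatten this distribution into a matrix whose rows are indexed by $(v_1,\ldots,v_m)$ and columns by $(v_{m+1},\ldots,v_n)$. By construction the flattening is a Kronecker product of the $m$ rank-$2$ block matrices, tensored with a rank-$1$ matrix coming from the extra independent variables, so it has matrix rank exactly $2^m$. Since the matrix rank of any flattening is a lower bound on the outer-product (tensor) rank, this distribution has rank at least $2^m$, and since $\RBM_{n,m} \subseteq \Mcal_{n,2^m}$ gives the reverse inequality, the tensor rank equals $2^m$.

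For Part 2 (the intersection has strictly smaller dimension), I would invoke the cited dimension formulas: by \cite{Cueto2010}, $\dim \RBM_{n,m} = nm + n + m$ (the hypothesis $m \leq n/2$ keeps us safely in the regime $m < 2^{n - \lceil \log_2(n+1)\rceil}$), and by \cite{Catalisano2011}, $\dim \Mcal_{n,m+1} = n(m+1) + m = nm + n + m$, the only exception being $(n, m+1) = (4,3)$, which is excluded by hypothesis. Every $p \in \Mcal_{n,m+1}$ has tensor rank at most $m+1$, which forces every matrix flattening to have rank at most $m+1$, a Zariski-closed condition cut out by the vanishing of the $(m+2)\times(m+2)$ minors of the flattening. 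For $m \geq 2$ we have $2^m \geq m+2$, so the distribution built in Part 1 violates at least one such minor condition and therefore lies outside $\Mcal_{n,m+1}$. This proves $\RBM_{n,m} \not\subseteq \Mcal_{n,m+1}$.

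To convert non-inclusion into strict dimension drop, note that the Zariski closure $\overline{\RBM_{n,m}}$ is irreducible, being the closure of the image of a polynomial map from the irreducible parameter space $\R^{nm+n+m}$. The minor-vanishing conditions are regular functions that do not vanish identically on this irreducible variety (witnessed by the distribution from Part 1), so their common zero locus meets $\overline{\RBM_{n,m}}$ in a proper subvariety of strictly smaller dimension. The intersection $\RBM_{n,m} \cap \Mcal_{n,m+1}$ is contained in this proper subvariety, hence has dimension strictly less than $nm + n + m$, as claimed. The main subtlety is keeping track of the flattening rank in the asymmetric case $n > 2m$ (handled by padding one side with independent variables, which tensors the flattening with a rank-$1$ factor and thus preserves the rank $2^m$), and invoking irreducibility of the RBM Zariski closure at the end; the exclusion of $(n,m+1) = (4,3)$ is precisely what is needed for $\Mcal_{n,m+1}$ to attain the expected dimension.
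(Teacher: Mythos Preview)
Your proof is correct and follows the paper's overall strategy: exhibit a distribution in $\RBM_{n,m}$ whose flattening has matrix rank $2^m$, then use the vanishing of $(m{+}2)\times(m{+}2)$ minors as an algebraic invariant of $\Mcal_{n,m+1}$ violated on $\RBM_{n,m}$, and conclude via irreducibility that the intersection drops in dimension. The witness distributions are in fact the same object---a product of $m$ rank-two distributions on disjoint pairs of visible units---but the justification for RBM-membership differs. The paper constructs the uniform distribution on the code $\Ccal=\{x\in\{0,1\}^{2m}: x_{2i-1}=x_{2i}\text{ for all }i\}$ and invokes the zonoset machinery (Theorem~\ref{zonotopecondpropo}) to place it in $\RBM_{2m,m}$; the flattening along odd versus even coordinates is then literally a $2^m\times 2^m$ diagonal matrix. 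You instead observe that a block-diagonal weight matrix makes the visible law factor as a product of independent $\RBM_{2,1}$ blocks, and since $\RBM_{2,1}=\Pcal_2$ each block can be tuned to any rank-two distribution; your flattening is the Kronecker product of these $2\times 2$ blocks with a rank-one factor from the extra $n-2m$ variables. This is a more elementary route (no appeal to Theorem~\ref{zonotopecondpropo}), and you handle the padding for $n>2m$ and the irreducibility step for the dimension drop more explicitly than the paper's one-line conclusion. Both arguments share the tacit restriction $m\geq 2$ in the second claim, since $2^m>m+1$ is what makes the minor condition separating; for $m\leq 1$ one has $\RBM_{n,m}=\Mcal_{n,m+1}$ and no separation is possible.
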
 
\begin{proof}
We show that if $m\leq n/2$, then  $\RBM_{n,m}$ contains a point $p$ with a flattening of rank $2^m$, which implies that $p$ has outer-product rank $2^m$. 
The flattenings of any $q\in \Mcal_{n,k}$ have rank at most $k$. 
This gives an algebraic invariant of the mixture of products model $\Mcal_{n,m+1}$ which is not satisfied by elements of $\RBM_{n,m}$. Hence, if both models have the same dimension $d$, then they intersect at a set of dimension strictly less than $d$. 

Consider the $m$-cube and the $2m$ hyperplanes through its center consisting of translates of the coordinate hyperplanes with multiplicity two. This hyperplane arrangement slices each edge of the $m$-cube exactly twice and generates a $(2m,m)$-LTC $\Ccal$ of minimum distance two.
The code $\Ccal$ consists of the $2^m$ binary vectors $x$ in $\{0,1\}^{2m}$ with $x_i=x_{i+1}$ for all odd $i$, and $\{(x_1,x_3,\ldots,x_{2m-1})\colon x\in\Ccal\}=\{0,1\}^m$. In the case $m=3$, for example, the code is 
\begin{equation*}
\Ccal= 
\begin{pmatrix} 
0&0&0&0&0&0 \\
1&1&0&0&0&0 \\
0&0&1&1&0&0 \\
0&0&0&0&1&1 \\
1&1&1&1&0&0 \\
1&1&0&0&1&1 \\
0&0&1&1&1&1 \\
1&1&1&1&1&1
\end{pmatrix}.
\end{equation*}
By Theorem~\ref{zonotopecondpropo}, $\RBM_{2m,m}$ contains the uniform distribution on $\Ccal$, $u_\Ccal$. View $u_\Ccal$ as a linear transformation from the $2^{m}$-dimensional space of real valued functions of $x_1,x_3,\ldots,x_{2m-1}$ to the space of functions of $x_2,x_4,\ldots,x_{2m}$. Then 
\begin{equation*}
u_\Ccal=\begin{pmatrix}
1/2^m &  &\\
 & 1/2^m &  & \\
 &  & \ddots & \\
 & &  & 1/2^m 
\end{pmatrix},
\end{equation*}
which has rank $2^m$. 
\end{proof}

\section{Discussion}
\label{sec:discussion}
RBMs create a multi-labeling of their input space by the most likely joint states of their hidden units given the inputs. 
The number of inference regions that can be generated in this way is of exponential order in the number of RBM parameters. 
The partitions of  $\R^n$ generated by an RBM with $n$ visible and $m$ hidden units can be identified with the intersections of affine spaces of dimension $d\leq \min\{n,m\}$ with the orthants of $\R^m$, whereby each affine space corresponds to a choice of the RBM parameters. 
We elaborated on the combinatorics of the resulting hyperplane arrangements, and on the combinatorics of point configurations in such hyperplane arrangements, in correspondence with the inference functions on the set of binary input vectors $\{0,1\}^n\subset\R^n$. 
Although the theory of hyperplane arrangements and linear separation of points is well studied in the literature, it still poses many questions (see examples below). 

We analyzed the sets of strong modes of probability distributions represented by RBMs and related them to the hyperplane arrangements and linear threshold codes (multi-labelings) mentioned above.  
The products of mixtures represented by RBMs are compact representations of probability distributions with many strong modes; of order $\min\{2^m,2^{n-1}\}$ for the RBM with $n$ visible and $m$ hidden units (exponential in the number of parameters). At the same time, Corollaries~\ref{corollaryoddnot} and~\ref{cor65} show that the hard bound $\min\{2^m,2^{n-1}\}$ is not always attained. 
Mixture models of product distributions (na\"ive Bayes models), on the other hand, generate less restricted input space partitions but into at most as many regions as mixture components, and can only represent probability distributions with a number of strong modes of linear order in the number of model parameters. 

These results imply that the smallest mixture model of product distributions that contains an RBM model is, in most cases, as large as one can possibly expect, having one mixture component per joint state of the RBM hidden units, and thus a number of parameters that is exponential in the number of RBM parameters. 
RBMs can represent distributions with many strong modes much more compactly than standard mixture models. 
This gives a concise combinatorial way of differentiating the two models. 
Fixing dimension, the RBMs which are hardest to represent as mixtures of product distributions are those with about the same number of visible and hidden units. 
At the same time, we note that there may exist small mixtures of product distributions which cannot be compactly represented by RBMs. 
For instance, Theorem~\ref{mixtnotinrbm} shows that $\Mcal_{n,m+1}\not\subseteq\RBM_{n,m}$ when $3\leq m\leq n/2$. 

These results aid our understanding of how models complement each other, and why distributed representations in  deep learning~\cite{Bengio-2009} can be expected to succeed, or when model selection can be based on theory  rather than trial-and-error. They confirm  the   intuition that distributed representations are exponentially more powerful than non-distributed ones,  in the case of binary RBMs and taking the number of strong modes, inference functions, and non-trivial perfectly reconstructible input sets as a measure of complexity. 
Other measures of complexity of probability distributions, such as  {\em multi-information}, which is defined as the Kullback-Leibler divergence to the set of  product distributions, are interesting but not necessarily best for differentiating between mixtures of products and RBMs. 
In terms of multi-information the most complex binary probability distributions have the form $p=\frac12 (\delta_x +\delta_y)$ with $x_i+y_i=1$ for all $i$, see~\cite{AyKnauf06:MaximizingMultiinformation}, and are contained in any (non-trivial) mixtures of products and  RBM models. 

Our approach has produced at least an order-of-magnitude or asymptotic understanding of the models we discuss.  We have also shown that to understand them fully would probably mean understanding as well some seemingly difficult equivalent combinatorial problems concerning linear threshold codes, hyperplane arrangements, multi-covering numbers, and so on.  On the other hand, we expect that our techniques can be used effectively to study more complex models in a similar fashion.

A number of problems is covered only partially by our analysis. 
Some interesting open cases include: 
\begin{itemize}
\item 
Computing multi-covering numbers for hypercubes of odd dimension larger than five. 
\item 
Characterizing the support sets of fully observable RBM models. This problem can be seen to be equivalent to characterizing the face lattices of polytopes defined as Kronecker products of hypercubes. 
\item 
Computing the maximal cardinality of linear threshold codes of minimum Hamming distance two. 
Are there cases where the first item of Theorem~\ref{notinclpropodos} holds for $4\lceil m/3 \rceil>n$, $m\leq n-1$, assuming $m\neq n-1$ when $n$ is odd? 
\item 
Can $\RBM_{8,7}$ represent probability distributions with $2^7$ strong modes? 
%Can one map the vertices of $7$-cube affinely into the even orthants of $\R^8$? 
\item 
Verifying the conjecture that $\RBM_{3,2}\cap\Gcal_{3}=\emptyset$, and, in addition, proving or disproving $\Mcal_{3,3}=\RBM_{3,2}$. 
\item 
For $2< m < 2^{n-1}-1$, does $\RBM_{n,m}$ contain $\Mcal_{n,m}$? 
\end{itemize}

\subsubsection*{Acknowledgments}
The authors are grateful to Johannes Rauh for helpful discussions on algebraic invariants, to Nihat Ay for comments on hyperplane arrangements, to Yoshua Bengio for helpful discussions regarding distributed representations, and to Bernd Sturmfels for discussions motivating the analysis of mixtures of products in RBMs. 
The authors acknowledge use of the RCC-ITS computer cluster at the Pennsylvania State University. 
This work is supported in part by DARPA grant FA8650-11-1-7145.

\bibliographystyle{abbrv}
\bibliography{Literature}

\begin{thebibliography}{10}

\bibitem{AyKnauf06:MaximizingMultiinformation}
N.~Ay and A.~Knauf.
\newblock Maximizing multi-information.
\newblock {\em Kybernetika}, 42:517--538, 2006.

\bibitem{Bengio-2009}
Y.~Bengio.
\newblock Learning deep architectures for {AI}.
\newblock {\em Foundations and Trends in Machine Learning}, 2(1):1--127, 2009.

\bibitem{bjoerner1999oriented}
A.~Bj{\"o}rner, M.~L. Vergnas, B.~Sturmfels, N.~White, and G.~M. Ziegler.
\newblock {\em Oriented Matroids}, volume~46 of {\em Encyclopedia of
  Mathematics and Its Applications}.
\newblock Cambridge University Press, 1999.

\bibitem{Catalisano2011}
M.~V. Catalisano, A.~V. Geramita, and A.~Gimigliano.
\newblock Secant varieties of $\mathbb{P}^1\times\dots\times\mathbb{P}^1$
  ($n$-times) are not defective for $n\geq5$.
\newblock {\em J. Algebraic Geometry}, 20:295--327, 2011.

\bibitem{Cover}
T.~M. Cover.
\newblock Geometrical and statistical properties of systems of linear
  inequalities with applications in pattern recognition.
\newblock {\em IEEE Transactions on Electronic Computers}, EC-14(3):326--334,
  1965.

\bibitem{Cueto2010}
M.~A. Cueto, J.~Morton, and B.~Sturmfels.
\newblock Geometry of the restricted {B}oltzmann machine.
\newblock In M.~A.~G. Viana and H.~P. Wynn, editors, {\em Algebraic methods in
  statistics and probability II, AMS Special Session}, volume~2. American
  Mathematical Society, 2010.

\bibitem{CuetoChallenge2010}
M.~A. Cueto, E.~A. Tobis, and J.~Yu.
\newblock An implicitization challenge for binary factor \-analysis\-.
\newblock {\em J. Symbolic Computation}, 45:1296--1315, December 2010.

\bibitem{Draisma}
J.~Draisma.
\newblock A tropical approach to secant dimensions.
\newblock {\em Journal of Pure Applied Algebra}, 212(2):349--363, 2008.

\bibitem{EmamyK20083156}
M.~R. Emamy-Khansary and M.~Ziegler.
\newblock On the coverings of the $d$-cube for $d\leq6$.
\newblock {\em Discrete Applied Mathematics}, 156(17):3156--3165, 2008.

\bibitem{Flatto}
L.~Flatto.
\newblock A new proof of the transposition theorem.
\newblock {\em Proceedings of the American Mathematical Society}, 24(1):29--31,
  1970.

\bibitem{polymake}
E.~Gawrilow and M.~Joswig.
\newblock Polymake: a framework for analyzing convex polytopes.
\newblock In {\em Polytopes -- Combinatorics and Computation}, pages 43--74.
  Birkh\"auser, 2000.

\bibitem{geiger:2006}
D.~Geiger, C.~Meek, and B.~Sturmfels.
\newblock On the toric algebra of graphical models.
\newblock {\em Annals of Statistics}, 34:1463--1492, 2006.

\bibitem{Hinton99productsof}
G.~E. Hinton.
\newblock Products of experts.
\newblock In {\em Proceedings of the 9-th ICANN}, volume~1, pages 1--6, 1999.

\bibitem{Hinton2002}
G.~E. Hinton.
\newblock Training products of experts by minimizing contrastive divergence.
\newblock {\em Neural Computation}, 14:1771--1800, 2002.

\bibitem{kahle:2009}
T.~Kahle, W.~Wenzel, and N.~Ay.
\newblock Hierarchical models, marginal polytopes, and linear codes.
\newblock {\em Kybernetika}, 45:189--208, 2009.

\bibitem{knuth1976big}
D.~E. Knuth.
\newblock Big omicron and big omega and big theta.
\newblock {\em SIGACT News}, 8(2):18--24, 1976.

\bibitem{LeRoux2008}
N.~Le~Roux and Y.~Bengio.
\newblock Representational power of restricted {B}oltzmann machines and deep
  belief networks.
\newblock {\em Neural Computation}, 20(6):1631--1649, 2008.

\bibitem{Montufar2010a}
G.~Mont{\'u}far.
\newblock Mixture decompositions of exponential families using a decomposition
  of their sample spaces.
\newblock {\em Kybernetika}, 49(1):23--39, 2013.

\bibitem{Montufar2011}
G.~Mont{\'u}far and N.~Ay.
\newblock Refinements of universal approximation results for deep belief
  networks and restricted {B}oltzmann machines.
\newblock {\em Neural Computation}, 23(5):1306--1319, 2011.

\bibitem{montufar2013discrete}
G.~Mont\'ufar and J.~Morton.
\newblock Discrete restricted {B}oltzmann machines.
\newblock In {\em International Conference on Learning Representations 2013
  (ICLR 2013), Scottsdale, Arizona, USA}, 2013.

\bibitem{Kronecker}
G.~Mont\'ufar and J.~Morton.
\newblock Dimension of hidden-visible products of exponential families.
\newblock Unpublished manuscript, 2014.

\bibitem{MonRauh2011}
G.~Mont{\'u}far, J.~Rauh, and N.~Ay.
\newblock Expressive power and approximation errors of restricted {B}oltzmann
  machines.
\newblock In {\em NIPS 24}, pages 415--423, 2011.

\bibitem{Muroga}
S.~Muroga, T.~Tsuboi, and C.~Baugh.
\newblock Enumeration of threshold functions of eight variables.
\newblock {\em IEEE Transactions on Computers}, C-19(9):818--825, sept. 1970.

\bibitem{oeis}
OEIS.
\newblock The on-line encyclopedia of integer sequences, {A000609} {N}umber of
  threshold functions of $n$ or fewer variables, 2010.
\newblock Published electronically at http://oeis.org, 2010.

\bibitem{Ojha}
P.~C. Ojha.
\newblock Enumeration of linear threshold functions from the lattice of
  hyperplane intersections.
\newblock {\em IEEE Transactions on Neural Networks}, 11(4):839--850, jul 2000.

\bibitem{orlik1992}
P.~Orlik and H.~Terao.
\newblock {\em Arrangements of Hyperplanes}.
\newblock Springer-Verlag, 1992.

\bibitem{RKA10:SupportSetsOrMat}
J.~Rauh, T.~Kahle, and N.~Ay.
\newblock Support sets of exponential families and oriented matroids.
\newblock {\em International Journal of Approximate Reasoning}, 52(5):613--626,
  2011.

\bibitem{Ray}
S.~Ray and B.~G. Lindsay.
\newblock The topography of multivariate normal mixtures.
\newblock {\em Annals of Statistics}, 33(5):2042--2065, 2005.

\bibitem{saks}
M.~E. Saks.
\newblock Slicing the hypercube.
\newblock In K.~Walker, editor, {\em Surveys in combinatorics}, pages 211--255.
  Cambridge University Press, New York, NY, USA, 1993.

\bibitem{schlaefli1901theorie}
L.~Schl{\"a}fli.
\newblock {\em Theorie der vielfachen {K}ontinuit{\"a}t}.
\newblock Cornell University Library historical math monographs. George \&
  Company, 1901.

\bibitem{schlaefli1953}
L.~Schl{\"a}fli.
\newblock {\em Gesammelte mathematische Abhandlungen}.
\newblock Number v. 2 in Gesammelte mathematische Abhandlungen. Birkh{\"a}user,
  1953.

\bibitem{Stanley04}
R.~Stanley.
\newblock An introduction to hyperplane arrangements.
\newblock In {\em Lecture notes, IAS/Park City Mathematics Institute}, 2004.

\bibitem{Wenzel2000284}
W.~Wenzel, N.~Ay, and F.~Pasemann.
\newblock Hyperplane arrangements separating arbitrary vertex classes in
  $n$-cubes.
\newblock {\em Advances in Applied Mathematics}, 25(3):284--306, 2000.

\bibitem{Winder}
R.~O. Winder.
\newblock Enumeration of seven-argument threshold functions.
\newblock {\em IEEE Transactions on Electronic Computers}, EC-14(3):315--325,
  1965.
\newblock See also correction in EC-16(2):231, 1967.

\bibitem{ziegler1995lectures}
G.~M. Ziegler.
\newblock {\em Lectures on polytopes}.
\newblock Graduate texts in mathematics. Springer-Verlag, 1995.

\bibitem{cubecuts}
M.~Ziegler.
\newblock The cut number home page.
\newblock \url{http://www2.cs.uni-paderborn.de/cs/ag-madh/WWW/CUBECUTS/}.

\bibitem{Zuev}
Y.~A. Zuev.
\newblock Asymptotics of the logarithm of the number of threshold functions of
  the algebra of logic.
\newblock {\em Soviet Mathematics Doklady}, 39(3):512--513, 1989.

\end{thebibliography}

\appendix

\bigskip

%\section*{Appendix}
% % % % % % % % % % % % %
\begin{table}[h] 
	\begin{equation*}
	\underset{\tiny\begin{matrix}(000)&(011)&(101)&(110)&(001)&(010)&(100)&(111)\end{matrix}}{\begin{array}{cccccccc}
		1& 0& 0& 0& 0& 0& 0& 0\\
		0& 1& 0& 0& 0& 0& 0& 0\\
		0& 0& 1& 0& 0& 0& 0& 0\\
		0& 0& 0& 1& 0& 0& 0& 0\\
		\hline
		0& 1/4& 1/4& 1/4& 0& 0& 0& 1/4 \\
		1/4& 0& 1/4& 1/4& 0& 0& 1/4& 0\\
		1/4& 1/4& 0& 1/4& 0& 1/4& 0& 0\\
		1/4& 1/4& 1/4& 0& 1/4& 0& 0& 0\\
		\hline
		1/6& 1/6& 1/6& 1/6& 0& 1/6& 0& 1/6\\
		1/6& 1/6& 1/6& 1/6& 1/6& 0& 0& 1/6\\
		1/6& 1/6& 1/6& 1/6& 0& 0& 1/6& 1/6\\
		1/6& 1/6& 1/6& 1/6& 1/6& 1/6& 0& 0\\
		1/6& 1/6& 1/6& 1/6& 1/6& 0& 1/6& 0\\
		1/6& 1/6& 1/6& 1/6& 0& 1/6& 1/6& 0\\
		\hline
		1/7& 1/7& 1/7& 1/7& 0& 1/7& 1/7& 1/7\\
		1/7& 1/7& 1/7& 1/7& 1/7& 0& 1/7& 1/7\\
		1/7& 1/7& 1/7& 1/7& 1/7& 1/7& 0& 1/7\\
		1/7& 1/7& 1/7& 1/7& 1/7& 1/7& 1/7& 0\\
		\hline
		1/8& 1/8& 1/8& 1/8& 1/8& 1/8& 1/8& 1/8\\
		\end{array}} 
	\end{equation*}
	\caption{Vertex-presentation of the polytope $\overline{\Gcal_3^+}\subset\Pcal_3\subset\R^8$ (the set of probability distributions on $\{0,1\}^3$ with modes $Z_{+,3}=\{ (000) ,  (011) ,  (101) , (110)\}$). 
        Each row is a probability distribution that is a vertex of $\overline{\Gcal_3^+}$. 
		The vertices in the first group are the point measures on $Z_{+,3}$. 
		They have degree $18$ (i.e., they are incident to $18$ edges) and are connected by edges to all other vertices. 
		The vertices in the second and fourth groups have degree $11$. 
		There are no edges between pairs of vertices in the second group. 
		The vertices in the third group have degree $8$. 
		The uniform distribution has degree $12$. 
		The $f$-vector of the polytope, indicating the number of faces in each dimension, is $f(\overline{\Gcal}_3^+) = (19, 110, 290, 387, 270, 96, 16)$. 
		The volume is $\vol(\Gcal_3^+)/\vol(\Pcal_3)=1/56$. 
		%The vertices of $\overline{\Gcal_3^\pm}$ are the extreme points of the set of uniform distributions on subsets of $\{0,1\}^3$ that can be covered by three disjoint cylinder subsets of $\{0,1\}^3$. 
	}\label{table:vertices}
\end{table}

\begin{table}
\begin{equation*}
	\begin{array}{c c c c | c c c c | c c c c c c | c c c c | c}
		{}& 2& 3& 4& 5& {}& {}& 8& {}& 10& {}& 12& 13&  {}&  {}& 16& 17& 18& 19\\
		{}& 2& 3& 4& 5& {}& 7&  {}& 9&  {}& {}& 12&  {}& 14& 15&  {}& 17& 18& 19\\
		{}& 2& 3& 4& 5& 6& {}& {}& {}&  {}& 11& {}& 13& 14& 15& 16&  {}& 18& 19\\
		1& {}& 3& 4& {}& 6& {}& 8& {}& 10&  {}& 12& 13&  {}&  {}& 16& 17& 18& 19\\
		1& {}& 3& 4& {}& 6& 7& {}& 9&  {}&  {}& 12&  {}& 14& 15&  {}& 17& 18& 19\\
		1& {}& 3& 4& 5& 6& {}& {}& 9& 10& 11&  {}&  {}&  {}& 15& 16& 17&  {}& 19\\
		1& 2& {}& 4& {}& {}& 7& 8& {}& 10&  {}& 12& 13&  {}&  {}& 16& 17& 18& 19\\
		1& 2& {}& 4& {}& 6& 7&  {}& {}&  {}& 11&  {}& 13& 14& 15& 16&  {}& 18& 19\\
		1& 2& {}& 4& 5& {}& 7&  {}& 9& 10& 11&  {}&  {}&  {}& 15& 16& 17&  {}& 19\\
		1& 2& 3& {}& {}& {}& 7& 8& 9&  {}&  {}& 12&  {}& 14& 15&  {}& 17& 18& 19\\
		1& 2& 3& {}& {}& 6& {}& 8& {}&  {}& 11&  {}& 13& 14& 15& 16&  {}& 18& 19\\
		1& 2& 3& {}& 5& {}& {}& 8& 9& 10& 11&  {}&  {}&  {}& 15& 16& 17&  {}& 19\\
		1& 2& 3& 4& {}& 6& 7& 8& {}&  {}&  {}& 12& 13& 14&  {}&  {}&  {}&18&  {}\\
		1& 2& 3& 4& 5& 6& 7& {}& 9&  {}& 11&  {}&  {}& 14& 15&  {}&  {}&  {}& {}\\
		1& 2& 3& 4& 5& 6& {}& 8& {}& 10& 11&  {}& 13&  {}&  {}& 16& {}& {}&  {}\\
		1& 2& 3& 4& 5& {}& 7& 8& 9& 10&  {}& 12&  {}&  {}&  {}&  {}& 17& {}&  {}\\ 
	\end{array} 
\end{equation*}
	\caption{
		Vertex-facet incidence table for the polytope $\overline{\Gcal_3^+}$. 
		Each row gives the list of vertices incident to one facet of $\overline{\Gcal_3^+}$ 
		(the index of each vertex corresponds to the row in which it appears in Table~\ref{table:vertices}). 
		The set of vertices in any face of the polytope is an intersection of some of the $16$ sets listed above. 
		%There are edges between vertices from the first and the second groups shown in Table~\ref{table:vertices} that correspond to probability distributions with supports of cardinality $5$. The support of these distributions can't be packed by any choice of three cylinder subsets of $\{0,1\}^3$. %These distributions are not contained in $\Mcal_{3,3}$. 
	}\label{table:facets}
\end{table}
% % % % % % % % % % % % %

\end{document}